\algnewcommand{\LineComment}[1]{\State \(\triangleright\) #1}
\renewcommand{\leq}{\leqslant}
\renewcommand{\le}{\leqslant}
\renewcommand{\geq}{\geqslant}
\renewcommand{\ge}{\geqslant}
  \theoremstyle{plain}
  \newtheorem{Theorem}{\protect\theoremname}
  \theoremstyle{plain}
  \newtheorem*{Theorem*}{\protect\theoremname}
  \theoremstyle{plain}
  \newtheorem{proposition}[Theorem]{\protect\propositionname}
  \theoremstyle{plain}
  \newtheorem*{prop*}{\protect\propositionname}
  \theoremstyle{plain}
  \newtheorem{lemma}[Theorem]{\protect\lemmaname}
  \theoremstyle{plain}
  \newtheorem*{lemma*}{\protect\lemmaname}  
  \theoremstyle{plain}
  \newtheorem{definition}{\protect\definitionname}
  \theoremstyle{plain}
  \newtheorem{corollary}[Theorem]{\protect\corollaryname}
  \theoremstyle{plain}
  \newtheorem{example}{\protect\examplename}
 \theoremstyle{plain}
\theoremstyle{plain}
\theoremstyle{plain}
\newtheorem{assumption}{\protect\assumptionname}
\theoremstyle{plain}
 \theoremstyle{plain}
\newtheorem*{model*}{Model}
\newcommand{\ind}{\mathbb{1}}
\renewcommand{\E}{\mathbb{E}}
\newcommand{\Prob}{\mathbb{P}}
\providecommand{\assumptionname}{Assumption}
\providecommand{\definitionname}{Definition}
\providecommand{\lemmaname}{Lemma}
\providecommand{\propositionname}{Proposition}
\providecommand{\corollaryname}{Corollary}
\providecommand{\examplename}{Example}
\providecommand{\factname}{Fact}
\providecommand{\conditionname}{Condition}
\providecommand{\theoremname}{Theorem}
\DeclareMathOperator*{\argmax}{arg\,max}  
\DeclareMathOperator*{\argmin}{arg\,min}  
\newcommand{\thetabf}{\bm{\theta}}
\newcommand{\lambdabf}{\boldsymbol{\lambda}}
\newcommand{\cl}{\textnormal{cl}}
\newcommand\blfootnote[1]{%
  \begingroup
  \renewcommand\thefootnote{}\footnote{#1}%
  \addtocounter{footnote}{-1}%
  \endgroup
}
\begin{document}

\title{Admissibility of Completely Randomized Trials:\\
A Large-Deviation Approach} 
\author{Guido Imbens}
\author{Chao Qin}
\author{Stefan Wager}
\affil{Stanford Graduate School of Business}
\pdfoutput=1
\maketitle
\onehalfspacing
\begin{abstract}
When an experimenter has the option of running an adaptive trial, is it admissible to ignore this option and run a non-adaptive trial instead? We provide a negative answer to this question in the best-arm identification problem, where the experimenter aims to allocate measurement efforts judiciously to confidently deploy the most effective treatment arm.
We find that, whenever there are at least three treatment arms, there exist simple adaptive designs that universally and strictly dominate non-adaptive completely randomized trials.
This dominance is characterized by a notion called efficiency exponent, which quantifies a design's statistical efficiency when the experimental sample is large.
Our analysis focuses on the class of batched arm elimination designs, which progressively eliminate underperforming arms at pre-specified batch intervals.
We characterize simple sufficient conditions under which these designs universally and strictly dominate completely randomized trials.
These results resolve the second open problem posed in \citet{qin2022open}.
\end{abstract}

\section{Introduction}
Randomized control trials (RCTs)\blfootnote{\hspace{-5mm}A preliminary version of this work will appear as a one-page abstract at the 26th ACM Conference on Economics and Computation (EC'25). 
We are grateful for
helpful conversations with Po-An Wang, Junpei Komiyama, Daniel Russo, Whitney Newey, and Pepe Montiel Olea.
This work was partially funded by ONR grant N-00014-24-12091.}
are considered a gold-standard method for causal inference and data-driven decision making in many
fields \citep{fisher1925design,imbens2015causal}. The traditional approach
to RCT designs involves pre-specifying the treatment assignment mechanism before
any data is collected; for example, in completely randomized trials (CRTs),
the actual treatment assignment is chosen uniformly at random from all possible
assignments with the same marginal treatment fractions \citep{fisher1925design}.
Recently, however, there has been growing concern that in settings where an analyst
wants to learn about multiple treatment arms and has the option to run an adaptive experiment,
standard designs such as CRTs may be inefficient relative to adaptive
designs that can use data collected early in the trial to better target
their experimentation budget. Such considerations can arise, for example,
in online marketing \citep{chapelle2011empirical}, interface design
\citep{qin2023adaptive}, job-search assistance \citep{caria2024adaptive},
or vaccine trials \citep{wu2022partial}.
It is by now clear that, in some application areas, adaptive trials can vastly
outperform non-adaptive designs \citep[e.g.,][]{chapelle2011empirical}. What's
less clear, however, is whether successful deployment of adaptive experimental
designs fundamentally relies on the use of problem-specific domain knowledge (in which
case basic CRTs would remain attractive as a robust, domain-agnostic baseline
method), or whether there exist adaptive designs that uniformly dominate CRTs \citep{qin2022open}.

Here, we provide an affirmative answer to this question in the context of
the well-known best-arm identification problem \citep{ABR-10}.\footnote{A similar problem is explored in the simulation literature, known as ranking and
selection problem \citep{glynn2004large, hong2021review}.} The goal in best-arm identification is to
deploy a treatment arm at the end of the experiment with high confidence that it
is the best, or achieves low
degradation in welfare. We show that---for this task and using a large-deviation error
metric (described further in Definition \ref{def:efficiency exponent} below)---there exist simple adaptive designs that dominate standard CRTs 
on an instance-by-instance level whenever there are at least $K \geq 3$ treatment arms to choose from. This implies that standard CRTs are not admissible among potentially adaptive randomized experiments in a sense analogous to that of
\citet{wald1949statistical}. 
Our result provides an affirmative answer to the second open problem posed in \citet{qin2022open}.

Our dominance results are achieved within a class of ``batched arm elimination'' (BAE)
designs, presented in Section \ref{sec:bae}. BAE designs sequentially discard the worst-performing arms from the experiment at pre-specified checkpoints. 
When there are only $K=2$ treatment arms, BAE designs reduce to standard CRTs. However, when $K \geq 3$ arms are available, we show that simple BAE designs can uniformly outperform CRTs. Here is an example of such BAE designs: given $T$ experimental units:
\begin{enumerate}
    \item Run a completely randomized trial with all $K$ arms on $\left\lceil \frac{K}{2(K-1)} T  \right\rceil$ units.
    \item Discard the worst-performing arm after the first batch.
    \item Run a completely randomized trial with the remaining $(K - 1)$ arms on all remaining units.
    \item Select the best of these $(K - 1)$ arms based on aggregate empirical performance across both batches.
\end{enumerate}
These dominance results are derived from an exact characterization of the large-deviation
behavior of BAE designs, which is presented in Section \ref{sec:main}.
In Section~\ref{sec:numerical}, we demonstrate our proposed design on a semi-synthetic experiment calibrated to a randomized trial by \citet{karlan2007does}.

\subsection{Problem formulation}

We frame our analysis in terms of a standard i.i.d.~sampling model for multi-armed experimentation \citep{lattimore2020bandit}.
An experimenter conducts an adaptive experiment to identify the best treatment arm among $K$ arms to deploy, following sequentially assigning these $K$ treatments to $T$ experimental units.
The potential outcome of assigning treatment $i  \in [K] \triangleq \{1,\ldots, K\}$ to experimental unit $t \in [T] \triangleq \{1,\ldots, T\}$ is a scalar random variable $Y_{t,i}$, where larger values indicate more desirable outcomes. We assume that for each treatment $i$, the potential outcomes $(Y_{t,i})_{t\in [T] }$ are drawn i.i.d. from a distribution $P(\cdot \mid \theta_i)$ with an unknown scalar parameter $\theta_i$. 
If these parameters were known, the experimenter would deploy an arm with the highest expected outcome, given by
$
\max_{i\in[K]} \intop y \cdot P(\mathrm{d}y\mid \theta_i).
$
Let $\thetabf \triangleq (\theta_1, \dots, \theta_K)$ denote the vector of unknown parameters, 
which we refer to as the \emph{problem instance}. 
Since these parameters are unknown, the experimenter interacts sequentially with  experimental units to learn which treatment arm is best.  
For the $t$-th experimental unit, the experimenter selects a treatment arm $I_{t}\in [K]$ based on the history of previously assigned treatments and observed outcomes, denoted by
$
H_{t-1}\triangleq \{I_1, Y_{1, I_1}, \ldots, I_{t-1}, Y_{{t-1},I_{t-1}}\}.
$
Importantly, only the outcome of the chosen treatment, $Y_{t, I_{t}}$, is observed; outcomes of the unselected treatments remain unknown.
The experimenter's goal is to identify and deploy the best treatment arm among $K$ arms with high confidence by the end of the experiment.

The experimenter needs to design a policy $\pi$, which is a (potentially randomized) decision rule that governs both the sequential allocation of treatment arms to $T$ experimental units and the final deployment of a treatment arm; specifically, it consists of:
\begin{enumerate}
\item An \emph{allocation rule} that sequentially assigns treatment arms based on observed outcomes.
\item A \emph{deployment rule} that selects a treatment arm for deployment after all $T$ units have been treated.
\end{enumerate}
Formally, the allocation rule is a function that maps 
the history of past allocations and outcomes, denoted by $H_{t-1}$, and the sample size $T$
to the treatment assignment $I_t$ for the $t$-th experimental unit. Additionally, after all $T$ units have been treated, the deployment rule maps the sample size $T$ and the full history $H_T$ to the final deployed arm $\hat{I}_T$.
We denote the class of all such policies by $\Pi$.

The experimenter's objective is to minimize the \emph{post-experiment utilitarian regret} of the arm deployed under policy $\pi$---also known as \emph{simple regret}, as termed by \citet{ABR-10}:
\begin{equation}
\label{eq:regret}
\mathfrak{R}^{\pi}_{\thetabf, T} \triangleq   \max_{i\in[K]}\theta_i - \E_{\thetabf,T}^{\pi}\left[\theta_{\hat{I}_T}\right].
\end{equation}
A number of authors have shown that, for analytic tractability, it is helpful to study adaptive experiments in an asymptotic regime where errors can be characterized using large-deviation methods \citep{chernoff1959sequential,glynn2004large,kaufmann2016complexity,russo2020simple}.
Here, we also leverage such asymptotics, under which
different exploration policies can be usefully compared in terms of the efficiency exponent given below. 
\begin{definition}[Efficiency exponent]
\label{def:efficiency exponent}
The efficiency exponent of a policy $\pi$ for instance $\thetabf$ is
\[
\mathfrak{e}^\pi_{\thetabf} \triangleq \liminf_{T\rightarrow \infty }  \,\, -\frac{1}{T} \ln\left(\mathfrak{R}^\pi_{\thetabf,T}\right).
\]
\end{definition}
We refer to a policy as admissible if there exists no other policy that beats it on an instance-by-instance level
(with strict inequality for some instance). 
\begin{definition}[Large-deviation admissible design]
\label{def:LD admissible}
Given a set of candidate instances $\Theta$,
a policy $\pi$ is large-deviation admissible is there is no policy $\tilde{\pi}\in \Pi$ such that
\[
\forall \thetabf\in\Theta, \quad
\mathfrak{e}_{\thetabf}^{\tilde{\pi}}  
\geq  
\mathfrak{e}_{\thetabf}^{\pi}
\quad\text{and}\quad
\exists \thetabf'\in\Theta, \quad
\mathfrak{e}_{\thetabf'}^{\tilde{\pi}}  
> 
\mathfrak{e}_{\thetabf'}^{\pi}
\]
\end{definition}

Throughout this paper, for simplicity, we will work under a setting where there is a unique best
arm and all arms have Gaussian sampling distributions with the same variance. Under this setting,
the efficiency exponent of completely randomized trials is well known. Our challenge will be to
find a policy that achieves a higher efficiency exponent on an instance-by-instance level.

\begin{assumption}
\label{assu:gauss}
Let $\sigma^2 > 0$ such that $P(\cdot \,|\, \theta_i) = \mathcal{N}(\theta_i, \, \sigma^2)$.
Given this class of distributions, we consider the set $\Theta$ of problem instances with a unique best arm,
\begin{equation}
\label{eq:instance class}
\Theta \triangleq \left\{\thetabf = (\theta_1,\ldots,\theta_K)\in \mathbb{R}^K \,\, : \,\,   I^*(\thetabf) \triangleq \argmax_{i\in[K]}\theta_i \text{ is a singleton set}\right\}.
\end{equation}
\end{assumption}

\begin{proposition}
\label{prop:exponent of uniform}
Under Assumption \ref{assu:gauss}, a completely randomized trial that (non-adaptively) uniformly allocates treatment across
$K$ available arms achieves an efficiency exponent:
\begin{equation}
\label{eq:exponent of uniform}
\mathfrak{e}_{\thetabf}^{\tt{Unif}} = 
\frac{\Delta_{\min}(\thetabf)^2}{4K\sigma^2}
\quad\text{where}\quad
\Delta_{\min}(\thetabf) \triangleq \theta_{I^*} - \max_{i\neq I^*}\theta_i.
\end{equation}
\proof
Recall that \citet[Proposition 2]{russo2020simple} establishes that
\[
\lim_{T\rightarrow \infty }  \,\, -\frac{1}{T} \ln\left(\Prob^{\tt{Unif}}_{\thetabf,T}\left(\hat{I}_T \neq I^*\right)\right) = \frac{\Delta_{\min}(\thetabf)^2}{4K\sigma^2}.
\]
Since for any policy $\pi$,
\[
\Delta_{\min}(\thetabf)\cdot \Prob^{\pi}_{\thetabf,T}\left(\hat{I}_T \neq I^*\right)
\leq \mathfrak{R}^{\pi}_{\thetabf, T} \leq 
\Delta_{\max}(\thetabf)\cdot \Prob^{\pi}_{\thetabf,T}\left(\hat{I}_T \neq I^*\right),
\]
where $\Delta_{\max} \triangleq \theta_{I^*} - \min_{i\neq I^*}\theta_i$, the equality in \eqref{eq:exponent of uniform} immediately from \citet[Proposition 2]{russo2020simple}, noting that $\Delta_{\min}(\thetabf) > 0$ by uniqueness of the best arm and that the optimality gaps $\Delta_{\min}$ and $\Delta_{\max}$ become irrelevant in the asymptotic regime
\qed
\end{proposition}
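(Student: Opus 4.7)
The plan is to reduce the question about simple regret to one about the probability of misidentifying the best arm, and then invoke the known Gaussian large-deviation rate for uniform allocation. Concretely, since $\hat{I}_T \in [K]$ and $\theta_{\hat{I}_T} \in [\theta_{I^*} - \Delta_{\max}(\thetabf), \theta_{I^*}]$, I can write
\[
\mathfrak{R}^{\pi}_{\thetabf,T} = \E_{\thetabf,T}^\pi\bigl[(\theta_{I^*} - \theta_{\hat{I}_T})\,\mathbb{1}\{\hat{I}_T \neq I^*\}\bigr],
\]
which is sandwiched between $\Delta_{\min}(\thetabf)\,\Prob_{\thetabf,T}^\pi(\hat{I}_T \neq I^*)$ and $\Delta_{\max}(\thetabf)\,\Prob_{\thetabf,T}^\pi(\hat{I}_T \neq I^*)$. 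Taking $-T^{-1}\ln(\cdot)$, the constants $\Delta_{\min}, \Delta_{\max} > 0$ (finite and positive by uniqueness of the best arm) vanish in the limit, so the efficiency exponent of any policy coincides with the exponential decay rate of its misidentification probability.

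Next, I would compute or cite that exponential rate for uniform allocation. Under Assumption \ref{assu:gauss} with $T/K$ samples per arm (up to rounding), the empirical mean difference $\bar Y_{I^*} - \bar Y_j$ is Gaussian with mean $\theta_{I^*} - \theta_j$ and variance $2K\sigma^2/T$. A standard Gaussian tail bound then gives
\[
\Prob_{\thetabf,T}^{\tt Unif}\bigl(\bar Y_{I^*} \leq \bar Y_j\bigr) = \exp\!\Bigl(-\tfrac{T(\theta_{I^*}-\theta_j)^2}{4K\sigma^2}(1+o(1))\Bigr),
\]
and since the natural deployment rule $\hat{I}_T = \arg\max_i \bar Y_i$ errs only if some such event occurs, a union bound together with Laplace's principle yields decay rate $\min_{j\neq I^*} (\theta_{I^*}-\theta_j)^2/(4K\sigma^2) = \Delta_{\min}(\thetabf)^2/(4K\sigma^2)$. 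A matching lower bound follows because the single most likely error event (confusion with the closest competitor) already has this exponent. This is exactly the content of \citet[Proposition~2]{russo2020simple}, so I would simply cite that result rather than reproduce the derivation.

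Combining the two steps gives
\[
\mathfrak{e}^{\tt Unif}_{\thetabf} = \liminf_{T\to\infty} -\tfrac{1}{T}\ln\mathfrak{R}^{\tt Unif}_{\thetabf,T} = \lim_{T\to\infty} -\tfrac{1}{T}\ln\Prob^{\tt Unif}_{\thetabf,T}(\hat{I}_T \neq I^*) = \tfrac{\Delta_{\min}(\thetabf)^2}{4K\sigma^2}.
\]
The only mildly delicate point is the sandwich step: one must verify that the $\liminf$ is actually a limit (so that multiplying by a $T$-independent constant does not affect the exponent), which is guaranteed because the cited Russo result states an actual limit, not merely a $\liminf$. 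Beyond that, the argument is essentially bookkeeping; the main conceptual obstacle would have been computing the large-deviation rate itself, but that is already handled by the cited result.
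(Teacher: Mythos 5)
Your proof is correct and follows essentially the same route as the paper: sandwich the regret between $\Delta_{\min}(\thetabf)$ and $\Delta_{\max}(\thetabf)$ times the misidentification probability, then cite \citet[Proposition~2]{russo2020simple} for the exact decay rate under uniform allocation. The extra sketch of how that rate is derived (Gaussian tails plus a union bound and a matching single-event lower bound) is a correct but optional supplement to the citation.
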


\subsection{Related work}

Pure exploration problems consist of an initial adaptive data collection phase followed by a deployment step.
Many different problem formulations fall under this broad framework. In this paper, we consider a ``fixed-budget'' model where the experiment length is given, and we seek the best possible post-experiment guarantees \citep{ABR-10, wang2023best}. Another classical model for pure exploration is the ``fixed-confidence'' model, where the
target error rate is taken as given and we seek to guarantee this error rate with the shortest possible expected experiment length \citep{chernoff1959sequential, garivier2016optimal}. Furthermore, one can use different metrics to quantify errors, including the utilitarian regret of the deployed arm \citep{kasy2021adaptive}. 
Regardless of the problem formulation, exact finite-sample analyses
for these questions present significant analytical challenges; consequently, most high-profile results in this area rely on asymptotics \citep{chernoff1959sequential, glynn2004large, kaufmann2016complexity, garivier2016optimal, russo2020simple}, as do we.

Among pure exploration problems, arguably the fixed-confidence setting has the longest history,
dating back to the classical work of \citet{chernoff1959sequential} on the sequential design of experiments, and
optimality under this model is well understood. In particular, \citet{garivier2016optimal}
demonstrate the existence of universally asymptotically optimal experimental designs under this model: There exist designs that guarantee error rate $\delta$ and whose expected stopping time as
$\delta \rightarrow 0$ has the best possible dependence for every problem instance~$\thetabf$.
\citet{qin2024optimizing} introduce a unified model that bridges the fixed-confidence setting and the classical regret minimization framework of \citet{lai1985asymptotically}, unifying results from both strands of the literature.

However, while the fixed-confidence problem seems a dual to the fixed-budget problem considered here, insights derived under the fixed-confidence model cannot be directly adapted to our setting \citep{qin2022open}. 
Unlike in the fixed-confidence model, universally asymptotically optimal policies do not exist under the fixed-budget model in full generality: \citet[Theorem 8]{degenne2023existence} show that no such policy exists for Bernoulli bandits with two arms or for Gaussian bandits with $K > e^{80/3}$ arms, and \citet{degenne2023existence} further conjectures that no universally asymptotically optimal policy exists even when $K\geq 3$ arms.
Thus, the problem of optimal experimental design under the fixed-budget model is fundamentally more complicated than that under the fixed-confidence model.

Given this context---and especially the non-existence of universally optimal designs in the fixed-budget
setting---we fall back on a follow-up question: Are there policies that at least dominate CRTs in terms
of their efficiency exponent, or, conversely, are CRTs large-deviation admissible? This question was recently
highlighted as an open problem by \citet{qin2022open};\footnote{This is the second open problem posed by \citet{qin2022open}; the first one was addressed by the results of \citet{ariu2021policychoicebestarm,degenne2023existence}.}
and, to the best of our knowledge, remained open until this paper. 

We do note that, when there are only $K = 2$ arms, CRTs are difficult to outperform---unlike the case with $K\geq 3$ arms. For two-armed Gaussian bandits,
\citet[Theorem 12]{kaufmann2016complexity} prove that Neyman allocation, i.e., CRTs with samples allocated proportionally
to arm variances, is universally asymptotically optimal (under an assumption that arm variances are known a-priori).
Meanwhile, for two-armed Bernoulli bandits, although \citet[Theorem 10]{degenne2023existence} show that no universally optimal policy exists, \citet{wang24c} prove that CRTs remain large-deviation admissible in the sense of Definition~\ref{def:LD admissible}.

The class of adaptive algorithms we design to beat CRTs under the fixed-budget model is an adaption of the
``successive rejects'' algorithm of \citep{ABR-10}, and falls under the general class of batched bandit
algorithms \citep{perchet2016batched}. One important question left open in this paper is the question of
hypothesis testing using our proposed algorithms; for example, it would useful to provide $p$-values
against the null hypothesis that the chosen arm was in fact sub-optimal. Recent advances in the literature
on inference from adaptively collected data include \citet{hadad2019confidence}, \citet{hirano2023asymptotic},
\citet{luedtke2016statistical} and \citet{zhang2020inference}.

\section{Batched arm elimination}
\label{sec:bae}

Recall that we seek to design an adaptive policy that can choose a good arm among $K$ options using $T$ datapoints.
Batched arm elimination (BAE) initializes the candidate arm set as $[K]$ and divides the whole sample of $T$ experimental units into $K-1$ batches. For each batch, BAE experiments on the treatment arms in the candidate set in a round-robin manner, and discards at the end of the batch an arm from the candidate set with the lowest empirical mean.

More specifically, BAE takes as inputs the sample size $T$ and $K-1$ instance-agnostic batch weights $\beta_{K}, \beta_{K-1},\ldots, \beta_2$, where the sample size $\beta_n\cdot T$ corresponds to the batch with the candidate set of $n$ arms.
BAE begins by experimenting on all arms in a round-robin fashion until $K$ arms have been allocated to a total of $\beta_K\cdot T$ experimental units, after which the arm with the lowest sample mean is eliminated. It then proceeds to experiment on the remaining  $K-1$ arms using another $\beta_{K-1}\cdot T$ units.
This process continues iteratively, reducing the number of arms in each batch by one, until only one arm remains.

We use the following notation throughout. Let $\Sigma_{K-1}$ be the $(K-2)$-dimensional simplex with $K-1$ entries. Given this, $(\beta_K, \beta_{K-1}, \ldots, \beta_2)\in\Sigma_{K-1}$. For $t\leq T$, the number of experimental units that receive treatment $i$ is denoted by $N_{t,i} \triangleq \sum_{\ell=1}^{t} \ind(I_\ell =i)$. When it is positive, we define the empirical mean reward as 
\begin{equation}\label{eq:posterior}
	m_{t,i} \triangleq \frac{\sum_{\ell=1}^{t} \ind(I_\ell=i) Y_{\ell,i}}{N_{t,i}}.
\end{equation}
When $N_{t,i}=0$, we let $m_{t,i} = 0$. We give pseudocode for the BAE
procedure in Algorithm~\ref{alg:bae}.

\begin{algorithm}[t]
	\centering
	\caption{Batched arm elimination}
    \label{alg:bae}
	\begin{algorithmic}[1]
		\State{\bf Input:} Sample size $T$, and batch weights $(\beta_K, \beta_{K-1}, \ldots, \beta_2)\in\Sigma_{K-1}$
		\State {\bf Initialize:} Number of remaining arms $n \gets K$, and candidate set $C \gets [K]$
		\For{$t=1,\ldots, T$}
            \State Assign $I_t \in \argmin_{i\in C} N_{t-1,i}$ to the $t$-th individual
            \State Update $\{N_{t, i}\}_{i\in C}$ and $\{m_{t, i}\}_{i\in C}$
            \If{$r \geq  2$ and $t = (\beta_K + \cdots + \beta_r)T$}
            \State Remove $\ell_n\in\argmin_{i\in C}m_{t,i}$ from $C$, i.e.,
            $C\gets C\setminus \{\ell_n\}$, and $n\gets n-1$
            \EndIf
		\EndFor 
  \State{\bf Output:} The only remaining arm in $C$
	\end{algorithmic}
\end{algorithm}

We note that BAE is a direct generalization of the ``successive rejects'' algorithm proposed by \citet{ABR-10}.
Successive rejects is BAE with the following batch weights:
\[
\left(\beta_K, \beta_{K-1}, \ldots, \beta_3\right) = \left(1, \frac{1}{K},\ldots,\frac{1}{4}\right) \cdot \frac{1}{\overline{\ln}(K)}
\quad\text{and}\quad\beta_2 = 1 - \sum_{n = 3}^K \beta_{n}.
\]
where $\overline{\ln}(K) = \frac{1}{2} + \sum_{i = 2}^K \frac{1}{i}$.
\citet{ABR-10} did not investigate uniform-dominance
results as we do here. Furthermore, the set of BAE procedures we show dominate
uniform sampling in fact does not include the original successive rejects algorithm. It can be verified that uniform allocation outperforms successive rejects in instances where all suboptimal arms are identical.
Additionally, we note that CRTs are a speical case of BAE designs with batch weights $(1,0,\ldots,0)\in\Sigma_{K-1}$. In other words, CRTs consists of a single batch that includes the entire sample size.

Our main result shows that there exist instance-agnostic batch weights such that BAE achieves a strictly higher efficiency exponent than uniform allocation for any problem instance. 

\section{CRT-dominating batched arm elimination}
\label{sec:main}

In this section, we demonstrate the universal superiority BAE designs over CRTs in terms of the efficiency exponent, providing certain sufficient conditions are met. We begin by deriving an exact characterization of the large-deviation behavior of all BAE designs.
Our characterization result extends the results of \citet{wang2023best, wang24c}, which focused on bounded observations and aimed to minimize the probability of misidentifying the best arm, with an emphasis on the successive rejects algorithm \citep{ABR-10} and its two variants proposed in \citet{wang2023best}. In contrast, we consider unbounded Gaussian observations and aim to minimize the utilitarian regret, studying all BAE designs with any batch weights $(\beta_K, \beta_{K-1}, \ldots, \beta_2)\in\Sigma_{K-1}$.

Recall that in BAE designs (Algorithm \ref{alg:bae}), we use $n\in\{K,\ldots,2\}$ to denote the number of remaining arms. 
Let $\mathcal{J}_{\thetabf,n}$ denote the collection of all sets with cardinality $n$ that includes the best arm $I^* = I^*(\thetabf)$:
\begin{equation}
\label{eq:set of sets}
\mathcal{J}_{\thetabf,n} \triangleq \{J\subseteq [K]:\,\,\left|J\right|=n,\,\, I^* \in J\},
\end{equation}
which depends on $\thetabf$ since it is defined based on its best arm $I^*$. 
Consider such a set $J\in \mathcal{J}_{\thetabf,n}$; we define the set of instances where $I^*$ is the worst-performing arm in this set:
\begin{equation}
\Lambda_{\thetabf, J} \triangleq \left\{\lambdabf\in\mathbb{R}^K:\,\,
\lambda_{I^*} \leq \min_{i\in J}\lambda_i
\right\}.
\end{equation}
Building on the previous definitions, we introduce a quantity that captures the minimal information required for the best arm $I^*$ to be eliminated at the end of a batch, starting with $n$ arms, by the remaining $n-1$ arms:
\begin{equation}
\label{eq:gamma}
\Gamma_{\thetabf, n} \triangleq \min_{J\in \mathcal{J}_{\thetabf, n}}
\inf_{\lambdabf\in \Lambda_{\thetabf, J}}
\sum_{i \in J}d(\lambda_i,\theta_i),
\end{equation}
where $d(\lambda, \theta)$ is the Kullback–Leibler (KL) divergence between two distributions parameterized by $\lambda$ and $\theta$. For two Gaussian distributions with means $\lambda$ and $\theta$ and common variance $\sigma^2$, the KL divergence $d(\lambda, \theta) = \frac{1}{2\sigma^2}(\lambda - \theta)^2$.

In addition to the minimal information quantity $\Gamma_{\thetabf, n}$, we compute the proportion (of the sample size $T$) allocated to the arm eliminated at the end of the batch, starting with $n$ arms:
\begin{equation}
\label{eq:proportion}
w_n \triangleq \frac{\beta_K}{K} + \frac{\beta_{K-1}}{K-1} + \cdots + \frac{\beta_n}{n},
\end{equation}
where the first term arises from the fact that, in the first batch, the proportion $\beta_K$ (of the sample size $T$) is uniformly allocated across $K$ arms, and the remaining terms follow the same logic.
These quantities introduced above characterize the efficiency exponent of the BAE designs, as established in Lemma~\ref{lem:bae}  below. The proof of Lemma \ref{lem:bae} is provided in Appendix~\ref{sec:proof of main}.

\begin{lemma}
\label{lem:bae}
Under Assumption \ref{assu:gauss} and
for batched arm elimination with batch weights $(\beta_K, \beta_{K-1}, \ldots, \beta_2)\in\Sigma_{K-1}$ (Algorithm \ref{alg:bae}), 
\[
\mathfrak{e}_{\thetabf}^{\tt{BAE}}  \geq \min_{n \in  \{K,K-1,\ldots,2\}}\,\, w_n\Gamma_{\thetabf,n},
\]
where $w_n$ and $\Gamma_{\thetabf,n}$ are defined in \eqref{eq:proportion} and \eqref{eq:gamma}, respectively.
\end{lemma}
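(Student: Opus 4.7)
I would route the whole argument through the misidentification probability $\Prob^{\tt{BAE}}_{\thetabf,T}(\hat I_T\neq I^*)$. By the sandwich used in Proposition~\ref{prop:exponent of uniform}, $\mathfrak{R}^{\tt{BAE}}_{\thetabf,T}\leq \Delta_{\max}(\thetabf)\Prob^{\tt{BAE}}_{\thetabf,T}(\hat I_T\neq I^*)$, so $-\tfrac{1}{T}\log\Delta_{\max}(\thetabf)$ is asymptotically negligible and it suffices to show
\[
\liminf_{T\to\infty}-\tfrac{1}{T}\log \Prob^{\tt{BAE}}_{\thetabf,T}(\hat I_T\neq I^*)\ \geq\ \min_{n\in\{2,\ldots,K\}} w_n\,\Gamma_{\thetabf,n}.
\]
A miss occurs iff $I^*$ is eliminated in exactly one of the $K-1$ batches, which I would index by the current number $n$ of remaining arms. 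A finite union bound gives
\[
\Prob^{\tt{BAE}}_{\thetabf,T}(\hat I_T\neq I^*)\ \leq\ \sum_{n=2}^{K}\Prob\bigl(I^*\text{ is eliminated at the batch with }n\text{ arms}\bigr),
\]
and the ``$\min_n$'' in the conclusion is then just the standard fact that the slowest of finitely many exponentially small terms dominates after $-\tfrac{1}{T}\log$.

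\textbf{Bounding each batch.} For fixed $n$, the candidate set $C$ at the start of that batch is random but, on the event of having survived so far, belongs to $\mathcal{J}_{\thetabf,n}$. Union-bounding over its value,
\[
\Prob\bigl(I^*\text{ eliminated at batch }n\bigr)\ \leq\ \sum_{J\in\mathcal{J}_{\thetabf,n}}\Prob\bigl(\{C=J\}\cap\{m_{t_n,I^*}\leq\min_{i\in J\setminus\{I^*\}} m_{t_n,i}\}\bigr),
\]
where $t_n$ marks the end of that batch. The round-robin rule (Line~4 of Algorithm~\ref{alg:bae}) ensures that on $\{C=J\}$ each $i\in J$ has received exactly $w_n T$ samples (up to $O(1)$ integer rounding of the $\beta_m T/m$ for $m\geq n$, which contributes only $O(1/T)$ to the eventual exponent). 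Introducing the pre-scheduled means $\bar Y_i^{(m)}\triangleq \tfrac{1}{m}\sum_{\ell=1}^{m} Y_{\ell,i}$ from an underlying i.i.d.\ master table, we then have $m_{t_n,i}=\bar Y_i^{(w_n T)}$ for every $i\in J$ on $\{C=J\}$, so dropping that indicator yields
\[
\Prob\bigl(I^*\text{ eliminated at batch }n\bigr)\ \leq\ \sum_{J\in\mathcal{J}_{\thetabf,n}}\Prob\bigl(\bar Y_{I^*}^{(w_n T)} \leq \min_{i\in J\setminus\{I^*\}}\bar Y_i^{(w_n T)}\bigr).
\]
Each right-hand summand is a Gaussian deviation probability with $w_n T$ independent samples per coordinate falling in the closed set $\Lambda_{\thetabf,J}$, for which Cram\'er's theorem for the Gaussian exponential family returns the rate $w_n\inf_{\lambdabf\in\Lambda_{\thetabf,J}}\sum_{i\in J}d(\lambda_i,\theta_i)$. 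Minimising over the finite family $\mathcal{J}_{\thetabf,n}$ recovers $w_n\Gamma_{\thetabf,n}$, and combining with paragraph~1 completes the proof.

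\textbf{Main obstacle.} The delicate step is the pre-scheduling coupling in the middle paragraph: justifying that one may legitimately replace the adaptive sample mean $m_{t_n,i}$ by the deterministic-index $\bar Y_i^{(w_n T)}$ after dropping $\{C=J\}$, despite the fact that $C$ itself is built from past data and hence depends on earlier draws of $Y_{\ell,i}$. This is resolved by coupling the algorithm to the i.i.d.\ master table: on $\{C=J\}$, the first $w_n T$ rows of arm $i$ in the table are precisely the samples the algorithm used to form $m_{t_n,i}$, so the full history dependence is absorbed into the dropped indicator and the Cram\'er bound applies to a genuinely non-adaptive Gaussian vector. The remaining technicalities---integer rounding of $\beta_n T$, the asymptotic equivalence between $\sum_n e^{-a_n T}$ and $e^{-T\min_n a_n}$, and the closedness of $\Lambda_{\thetabf,J}$ needed for Cram\'er's upper bound---are routine bookkeeping.
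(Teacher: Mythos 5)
Your proof is correct, and its skeleton matches the paper's exactly: reduce utilitarian regret to the misidentification probability via the $\Delta_{\min}/\Delta_{\max}$ sandwich, union-bound over the $K-1$ batches indexed by the number $n$ of surviving arms, then union-bound over the possible candidate sets $J\in\mathcal{J}_{\thetabf,n}$, and let the finite-max-to-min conversion produce the $\min_n w_n\Gamma_{\thetabf,n}$ in the exponent. Where you genuinely diverge is the key analytic step. The paper bounds $\Prob(\ell_n=I^*,\,C_n=J)$ by the joint event $\{\bm{m}_{T_n}\in M,\ \bm{p}_{T_n}\in P\}$ and invokes Theorem~\ref{thm:LD}, a large-deviation principle for the empirical means \emph{and} empirical allocation proportions of a general non-anticipating algorithm (extending \citealp{wang2023best}); the desired rate $w_n\Gamma_{\thetabf,n}$ then falls out of $F_{\thetabf,M}$ evaluated on the allocation set $P$. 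You instead observe that BAE's round-robin rule makes the pull counts deterministic conditional on $\{C_n=J\}$ --- every $i\in J$ has exactly $w_nT$ samples at time $T_n$ --- and use the standard i.i.d.\ master-table (reward-stack) coupling to identify $m_{T_n,i}$ with a fixed-index average $\bar Y_i^{(w_nT)}$ on that event; dropping the indicator then leaves a genuinely non-adaptive Gaussian vector to which the classical Cram\'er upper bound for the closed set $\Lambda_{\thetabf,J}$ applies directly. Your route is more elementary and self-contained for this specific algorithm, since it sidesteps the adaptive LDP machinery entirely; the paper's route is heavier but more portable, as Theorem~\ref{thm:LD} would also cover designs whose allocation proportions are not pinned down by the surviving set. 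The caveats you flag (integer rounding of $\beta_nT$, closedness of $\Lambda_{\thetabf,J}$, ties being null events for continuous Gaussians) are indeed routine and do not hide a gap.
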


Our next task is to establish sufficient conditions for BAE designs being universally dominating the CRT.
We derive the following lower bound on the information quantity $\Gamma_{\thetabf,n}$ in \eqref{eq:gamma}, by analyzing different instance configurations respectively.

\begin{lemma}[A lower bound on minimal information $\Gamma_{\thetabf,n}$]
\label{lem:lower bound}
For any $n\in \{K, \ldots, 2\}$, 
\[
\Gamma_{\thetabf,n}\geq \frac{n-1}{n}\frac{\Delta_{\min}(\thetabf)^2}{2\sigma^2}
\quad\text{where}\quad
\Delta_{\min}(\thetabf) = \theta_{I^*} - \max_{i\neq I^*}\theta_i.
\]
The inequality becomes equality for the instances such that the suboptimal arms are the same, i.e., $\theta_i = \theta_j$ for any $i,j\neq I^*$.
\end{lemma}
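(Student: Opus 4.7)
The plan is to lower bound the inner infimum in \eqref{eq:gamma} uniformly over $J \in \mathcal{J}_{\thetabf, n}$ by reducing it to a one-variable convex minimization. Under Assumption~\ref{assu:gauss} we have $d(\lambda, \theta) = (\lambda - \theta)^2/(2\sigma^2)$, so it suffices to prove that for every $J \in \mathcal{J}_{\thetabf, n}$,
\[
\inf_{\lambdabf \in \Lambda_{\thetabf, J}}\; \sum_{i \in J}(\lambda_i - \theta_i)^2 \;\geq\; \frac{n-1}{n}\,\Delta_{\min}(\thetabf)^2.
\]
Fix such a $J$ and write $\Delta_i \triangleq \theta_{I^*} - \theta_i$ for $i \in J \setminus \{I^*\}$; each satisfies $\Delta_i \geq \Delta_{\min}(\thetabf) > 0$ by definition of $\Delta_{\min}$. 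I would reparametrize with $u = \theta_{I^*} - \lambda_{I^*}$: membership $\lambdabf \in \Lambda_{\thetabf, J}$ becomes $\lambda_i \geq \theta_{I^*} - u$ for each $i \in J \setminus \{I^*\}$, and because the coordinates $\lambda_i$ decouple, the constrained minimum of $(\lambda_i - \theta_i)^2$ is attained at $\lambda_i = \max(\theta_i,\, \theta_{I^*} - u)$ and equals $\max(\Delta_i - u,\, 0)^2$. The inner infimum therefore equals
\[
\inf_{u \in \RR}\; u^2 + \sum_{i \in J \setminus \{I^*\}} \max(\Delta_i - u,\, 0)^2.
\]

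Next comes the key step: the termwise inequality $\max(\Delta_i - u, 0)^2 \geq \max(\Delta_{\min}(\thetabf) - u,\, 0)^2$ holds for every $u \in \RR$ (easy to verify by cases on the sign of each clamp, using $\Delta_i \geq \Delta_{\min}$), so the above is at least
\[
\inf_{u \in \RR}\; u^2 + (n-1)\,\max(\Delta_{\min}(\thetabf) - u,\, 0)^2.
\]
This is a one-dimensional convex, piecewise-quadratic objective, whose minimizer is $u^\star = \tfrac{n-1}{n}\Delta_{\min}(\thetabf) \in (0, \Delta_{\min}(\thetabf))$ with minimum value $\tfrac{n-1}{n}\Delta_{\min}(\thetabf)^2$. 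Dividing by $2\sigma^2$ and taking the outer minimum over $J$ then yields the claimed lower bound.

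For the equality statement, when $\theta_i = \theta_j$ for all $i, j \neq I^*$, each $\Delta_i$ equals $\Delta_{\min}$, so the termwise inequality is an equality for every $u$; moreover, $u^\star$ lies in the region where all clamps are inactive, so the one-variable minimum is attained exactly. Hence the entire chain of inequalities collapses to equalities and is achieved simultaneously for every $J \in \mathcal{J}_{\thetabf, n}$.

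I do not anticipate a serious obstacle here. The one subtlety worth flagging is the temptation to compute the exact inner infimum: written out via KKT, the exact value involves the average gap $\bar\Delta_J = (n-1)^{-1}\sum_{i \in J \setminus \{I^*\}} \Delta_i$, the dispersion of the $\Delta_i$'s, and a combinatorial case analysis of which of the $n-1$ constraints are active at the optimum. Collapsing each $\Delta_i$ to $\Delta_{\min}$ \emph{before} the one-variable minimization in $u$ bypasses this combinatorics and produces a bound that is simultaneously uniform in $J$ and tight in the symmetric ``equal sub-optimal arms'' configuration, which is precisely the worst case.
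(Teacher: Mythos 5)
Your argument is correct, but it takes a genuinely different route from the paper. The paper does not relax before optimizing: it first reduces $\Gamma_{\thetabf,n}$ to the quantity $\Psi(\mu_1,\ldots,\mu_n)$ of \eqref{opt:1}, invokes the exact closed-form evaluation of that projection from \citet[Proposition 1]{wang2023best} (a case analysis over which coordinates are averaged at the optimum, i.e., exactly the KKT combinatorics you chose to bypass), rewrites each case in gap coordinates $\Delta_i=\mu_1-\mu_i$, and then lower-bounds each branch separately via the Cauchy--Schwarz inequality $\sum_i\Delta_i^2\geq(\sum_i\Delta_i)^2/m$ together with the branch's defining condition. Your approach instead reparametrizes by $u=\theta_{I^*}-\lambda_{I^*}$, uses the termwise monotonicity $\max(\Delta_i-u,0)^2\geq\max(\Delta_{\min}(\thetabf)-u,0)^2$ to collapse to a single one-dimensional convex piecewise-quadratic problem, and solves that directly; all steps check out (including the decoupling of coordinates for fixed $u$, the value $\tfrac{n-1}{n}\Delta_{\min}(\thetabf)^2$ at $u^\star=\tfrac{n-1}{n}\Delta_{\min}(\thetabf)$, and the equality analysis in the symmetric instance). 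What your route buys is self-containment and brevity: no appeal to an external closed-form proposition and no case analysis. What the paper's route buys is the exact formula for $\Psi$, which additionally yields the strict upper bound $\Psi<\sum_{i=2}^n(\mu_1-\mu_i)^2$ stated in Lemma~\ref{lem:xi lower bound}; your relaxation, by design, only produces the lower bound, which is all that Lemma~\ref{lem:lower bound} requires.
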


The proof of Lemma \ref{lem:lower bound} is provided in Section \ref{sec:lower bound}. 
By integrating this lower bound on the minimal information $\Gamma_{\thetabf,n}$ into the efficiency exponent in Lemma~\ref{lem:bae}, we obtain the lower bound for efficiency exponent of BAE designs.

\begin{corollary}[A lower bound on BAE's efficiency exponent]
Under Assumption \ref{assu:gauss} and
for batched arm elimination with batch weights $(\beta_K, \beta_{K-1}, \ldots, \beta_2)\in\Sigma_{K-1}$ (Algorithm \ref{alg:bae}), 
\[
\mathfrak{e}_{\thetabf}^{\tt{BAE}} \geq \frac{\Delta_{\min}(\thetabf)^2}{2\sigma^2} \min_{n\in\{K,\dots,2\}} w_n\frac{n-1}{n}.
\]
\end{corollary}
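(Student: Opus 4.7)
The plan is to obtain this corollary as an immediate consequence of chaining Lemma~\ref{lem:bae} with Lemma~\ref{lem:lower bound}; no new probabilistic or analytic machinery is needed beyond what has already been established. Concretely, Lemma~\ref{lem:bae} asserts that
\[
\mathfrak{e}_{\thetabf}^{\tt{BAE}} \,\geq\, \min_{n\in\{K,K-1,\ldots,2\}} w_n \, \Gamma_{\thetabf,n},
\]
and Lemma~\ref{lem:lower bound} supplies the instance-dependent lower bound $\Gamma_{\thetabf,n} \geq \frac{n-1}{n}\cdot\frac{\Delta_{\min}(\thetabf)^2}{2\sigma^2}$ that is valid for every $n\in\{K,\ldots,2\}$ simultaneously. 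The only task is to substitute the second into the first and simplify.

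First, I would observe that each $w_n$, being a sum of nonnegative batch weights $\beta_K/K + \beta_{K-1}/(K-1) + \cdots + \beta_n/n$, is itself nonnegative, so multiplying the lower bound on $\Gamma_{\thetabf,n}$ by $w_n$ preserves the inequality direction, giving
\[
w_n\,\Gamma_{\thetabf,n} \,\geq\, w_n\cdot \frac{n-1}{n}\cdot \frac{\Delta_{\min}(\thetabf)^2}{2\sigma^2}
\quad\text{for every }n\in\{K,\ldots,2\}.
\]
Next, I would take the minimum over $n$ on both sides: since the inequality holds termwise, the minimum of the left-hand sides is at least the minimum of the right-hand sides. Finally, because the factor $\Delta_{\min}(\thetabf)^2/(2\sigma^2)$ does not depend on $n$ and is nonnegative, it can be pulled outside the minimum, yielding
\[
\min_{n\in\{K,\ldots,2\}} w_n\,\Gamma_{\thetabf,n} \,\geq\, \frac{\Delta_{\min}(\thetabf)^2}{2\sigma^2}\;\min_{n\in\{K,\ldots,2\}} w_n\,\frac{n-1}{n},
\]
and combining this with the bound from Lemma~\ref{lem:bae} delivers the claim.

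Since the argument is a one-step substitution followed by factoring a nonnegative constant out of a minimum, there is no genuine obstacle to overcome here; any subtlety lies entirely in the two prior lemmas. The only item worth flagging in the write-up is the justification that the minimum respects the inequality and that the $\thetabf$-dependent factor may be extracted, both of which follow from nonnegativity.
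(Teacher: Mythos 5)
Your proposal is correct and matches the paper's own argument, which likewise obtains the corollary by substituting the bound $\Gamma_{\thetabf,n} \geq \frac{n-1}{n}\frac{\Delta_{\min}(\thetabf)^2}{2\sigma^2}$ from Lemma~\ref{lem:lower bound} into the minimum in Lemma~\ref{lem:bae} and factoring out the $n$-independent constant. The extra justifications you flag (nonnegativity of $w_n$ and of the extracted factor) are valid and harmless, though the paper treats the step as immediate.
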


By comparing this lower bound for efficiency exponent of BAE designs with the efficiency exponent of CRTs in Proposition \ref{prop:exponent of uniform}, we immediately derive the sufficient conditions for BAE designs to universally outperforms CRTs, based on the batch weights of BAE designs.
\begin{Theorem}
[Sufficient conditions]
\label{theo:suff}
Under Assumption \ref{assu:gauss} and
for batched arm elimination with batch weights $(\beta_K, \beta_{K-1}, \ldots, \beta_2)\in\Sigma_{K-1}$ (Algorithm \ref{alg:bae}),
\begin{equation}
\label{eq:condition}
\min_{n\in\{K,\ldots,2\}}\,\, w_n \frac{n-1}{n} > \frac{1}{2K}
\quad\implies\quad
\mathfrak{e}_{\thetabf}^{\tt{BAE}}  
> 
\mathfrak{e}_{\thetabf}^{\tt{Unif}}, \quad\forall \bm{\theta}\in\Theta.
\end{equation}
\end{Theorem}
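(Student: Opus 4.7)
The theorem is essentially a direct consequence of the Corollary that precedes it combined with Proposition~\ref{prop:exponent of uniform}, so my plan is to chain the two bounds together. The Corollary gives the lower bound
\[
\mathfrak{e}_{\thetabf}^{\tt{BAE}} \;\geq\; \frac{\Delta_{\min}(\thetabf)^2}{2\sigma^2}\,\min_{n\in\{K,\ldots,2\}} w_n\,\frac{n-1}{n},
\]
while Proposition~\ref{prop:exponent of uniform} gives the exact value
\[
\mathfrak{e}_{\thetabf}^{\tt{Unif}} \;=\; \frac{\Delta_{\min}(\thetabf)^2}{4K\sigma^2}.
\]
Since $\thetabf \in \Theta$ has a unique best arm by Assumption~\ref{assu:gauss}, we have $\Delta_{\min}(\thetabf) > 0$, so the common prefactor $\Delta_{\min}(\thetabf)^2/(2\sigma^2)$ is strictly positive.

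\textbf{Key step.} Under the hypothesis $\min_{n} w_n\,(n-1)/n > 1/(2K)$, multiplying both sides by the strictly positive prefactor $\Delta_{\min}(\thetabf)^2/(2\sigma^2)$ preserves strict inequality, yielding
\[
\mathfrak{e}_{\thetabf}^{\tt{BAE}} \;\geq\; \frac{\Delta_{\min}(\thetabf)^2}{2\sigma^2}\min_{n} w_n\frac{n-1}{n} \;>\; \frac{\Delta_{\min}(\thetabf)^2}{2\sigma^2}\cdot \frac{1}{2K} \;=\; \frac{\Delta_{\min}(\thetabf)^2}{4K\sigma^2} \;=\; \mathfrak{e}_{\thetabf}^{\tt{Unif}}.
\]
This holds for every $\thetabf \in \Theta$, which is exactly the conclusion of the theorem.

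\textbf{Where the real work lies.} There is no genuine obstacle in this final step itself; the substantive content has already been discharged upstream. Specifically, the nontrivial pieces are (i) Lemma~\ref{lem:bae}, which extracts the large-deviation rate of the BAE policy in terms of $w_n$ and $\Gamma_{\thetabf,n}$ and requires combining a round-robin deviation bound within each batch with a union bound over the elimination events across the $K-1$ batches; and (ii) Lemma~\ref{lem:lower bound}, the lower bound $\Gamma_{\thetabf,n} \geq \tfrac{n-1}{n}\,\Delta_{\min}(\thetabf)^2/(2\sigma^2)$, which requires solving the inner KL-minimization in \eqref{eq:gamma} and showing that, across configurations of suboptimal arms, the worst case is when all suboptimal arms coincide. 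Granting those two lemmas, the theorem follows from a single line of algebra as above, so the only subtlety to flag when writing it up is to invoke $\Delta_{\min}(\thetabf) > 0$ so that the multiplication by the prefactor preserves the strict inequality.
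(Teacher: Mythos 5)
Your proposal is correct and matches the paper's own derivation: the theorem is obtained exactly by chaining the Corollary's lower bound $\mathfrak{e}_{\thetabf}^{\tt{BAE}} \geq \frac{\Delta_{\min}(\thetabf)^2}{2\sigma^2}\min_n w_n\frac{n-1}{n}$ with the exact value $\mathfrak{e}_{\thetabf}^{\tt{Unif}} = \frac{\Delta_{\min}(\thetabf)^2}{4K\sigma^2}$ from Proposition~\ref{prop:exponent of uniform}, using $\Delta_{\min}(\thetabf)>0$ to preserve the strict inequality. You also correctly identify that the substantive work lives in Lemmas~\ref{lem:bae} and~\ref{lem:lower bound}, which is consistent with the paper treating this final step as immediate.
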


Recall that $w_n$, defined in \eqref{eq:proportion}, is the proportion (of the sample size $T$) allocated to the arm eliminated at the end of the batch starting with $n$ arms. 
As the number of remaining arms $n$ decreases, the proportion $w_n$ increases, while the multiplier $\frac{n-1}{n}$ decreases. 
The sufficient condition above requires that the proportions allocated to all arms, weighted by their respective multipliers, exceed half the proportion each arm would receive under uniform allocation or CRTs.

We present simple BAE designs consisting of only two batches, with batch weights satisfying the sufficient conditions in \eqref{eq:condition}. 

\begin{example}[Two-batch CRT-dominating BAE]
\label{coro:example}
Consider a two-batch policy that eliminate $s\in [K-2]$ arms after the first batch, and the other $(K-1-s)$ arms at the end of the second batch (i.e., at time $T$):
\[
\beta_{K-1} = \beta_{K-2} = \cdots = \beta_{K-s + 1} = 0,
\quad
\beta_{K-s} = 1 - \beta_K,
\quad
\beta_{K-s-1} = \cdots = \beta_2 = 0.
\]
Under this, the sufficient conditions for CRT-dominance become
\[
\beta_K > \frac{1}{2} + \frac{1}{2(K-s)}.
\]
The example presented in the introduction corresponds to the case $s=1$, where only one arm is dropped after the first batch.

As the number of arms eliminated after the first batch increases, the minimal required first-batch size also increases. This is intuitive, as BAE must allocate a larger initial batch size to safely eliminate more arms but still achieve universal improvement. 
On the other hand, if the number of arms eliminated after the first batch is fixed while the total number of arms grows large, the minimal required size of the first batch approaches half of the total sample size.


\end{example}

\section{Numerical Example}
\label{sec:numerical}

\citet{karlan2007does} report results on an experiment to test whether
matching gifts increase charitable giving. Their experiment includes 4
arms: A control arm, and 3 treatment arms that offer 1:1, 2:1 and 3:1
matches to potential donors (a k:1 match promises that each \$1 given will
be matched by a \$k gift from a different donor). The outcome we are
interested in was the total amount given. This is a sparse and
skewed outcome: Only 2\% of prospective donors give anything (i.e., 98\%
of the outcomes are 0), and conditionally on donating the mean donation
is \$44 with a standard deviation of \$42. The distribution of the data
presents a marked departure of the Gaussianity assumption use in our
formal results, and presents us with an opportunity to investigate
practical robustness of our algorithm to distributions one might face
in real-world applications.

The original experiment of \citet{karlan2007does} had a sample size of
$n = 50,083$. Here, we start by non-parametrically fitting the data-generating
distribution. For each arm we separately tally the fraction of zero outcomes
and fit the density of log donation amounts for non-zero donations via kernel
density estimation; and combine these to obtain a zero-inflated and
skewed density for the outcomes themselves. We then simulate data from this
fitted distribution to compare the performance of the following two algorithms
for different sample sizes $T$:
\begin{itemize}
    \item Completely randomized trial: We simply allocate $T/4$ samples to each arm.
    \item The variant of batched arm elimination described in Corollary \ref{coro:example} with $s = 1$: We run a completely randomized trial on all arms using $\frac23 T$ of the data, eliminate the worst-performing arm, and run a completely randomized trial on the remaining 3 arms using the rest of the data.
\end{itemize}
Throughout, we pick $T$ so that no rounding is required.

\begin{figure}[t]
    \centering
    \begin{tabular}{cc}
    \includegraphics[width=0.45\linewidth]{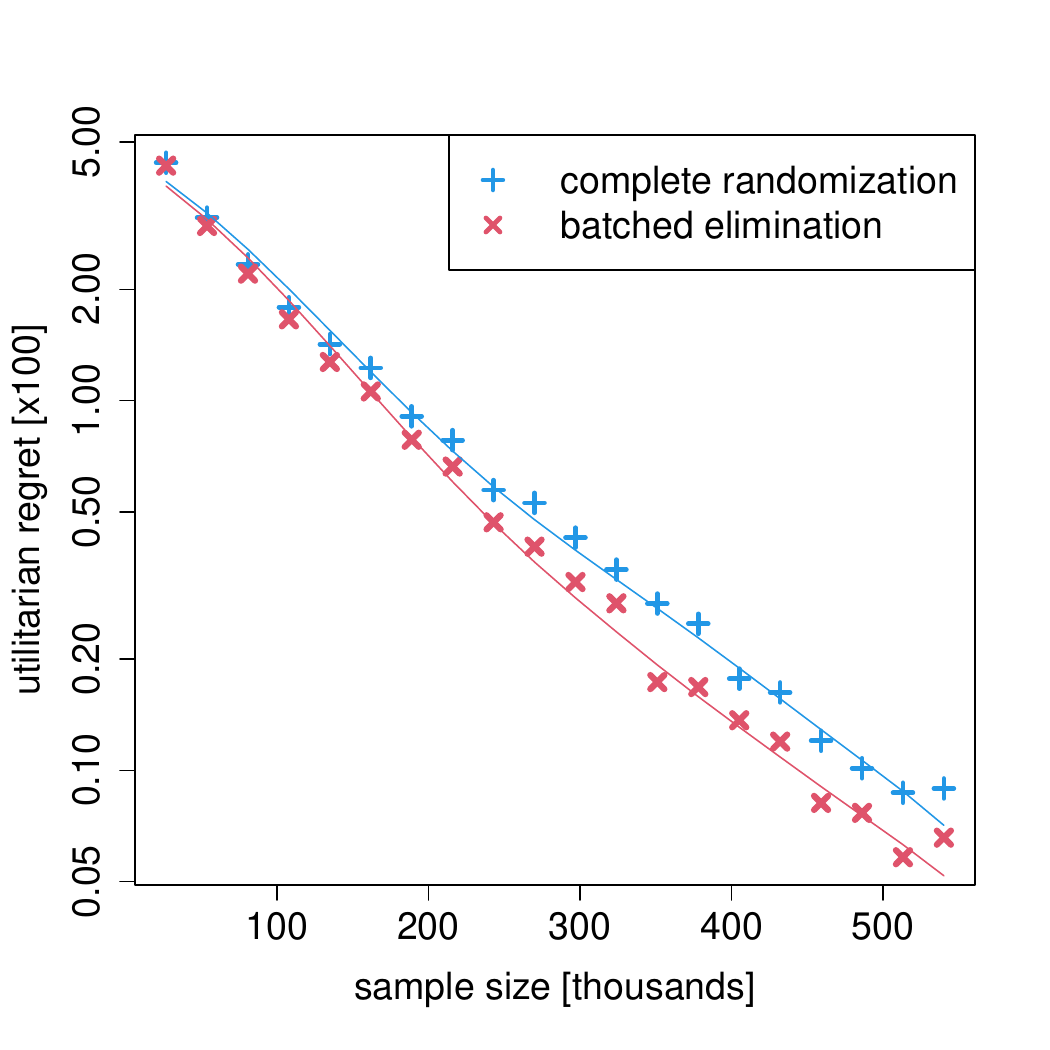} &
    \includegraphics[width=0.45\linewidth]{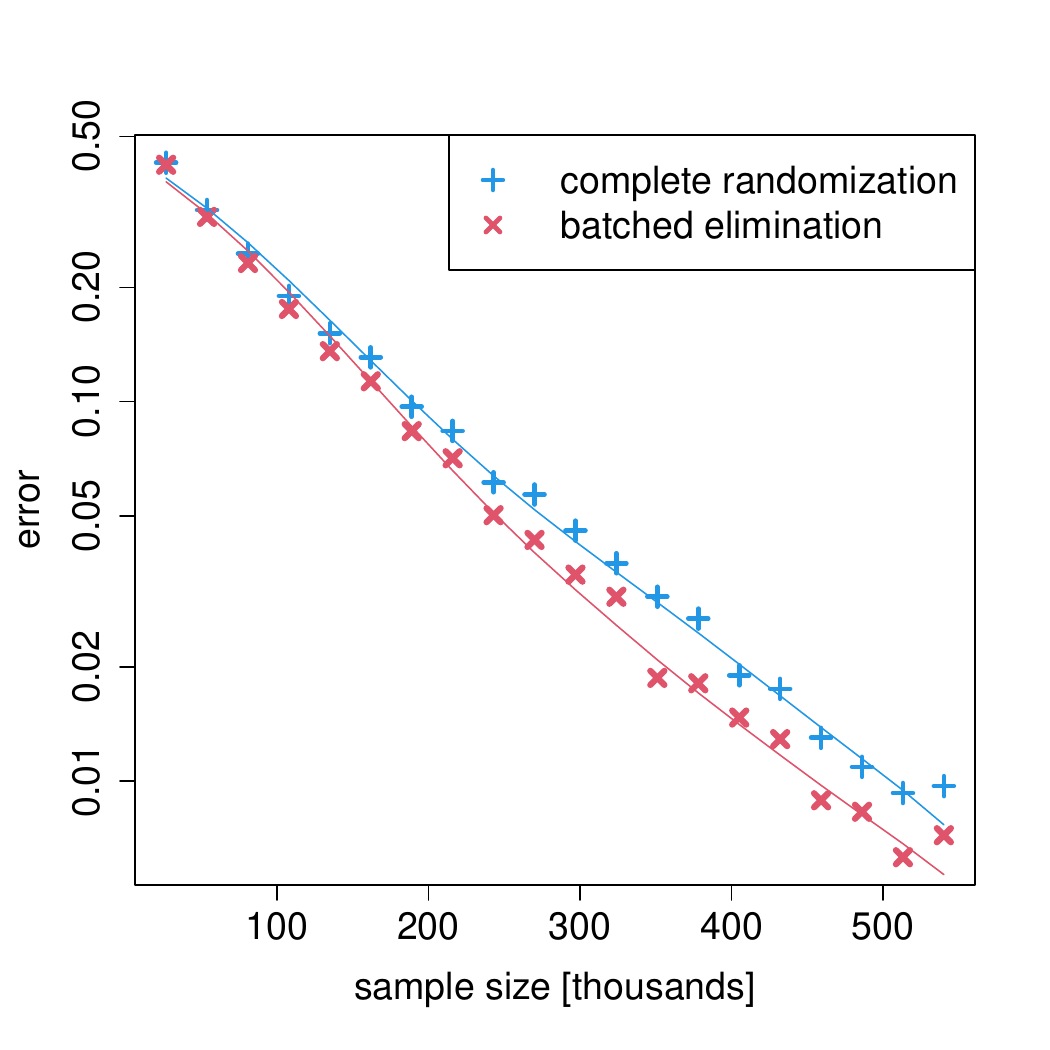}
    \end{tabular}
    \caption{Utilitarian regret (left) and arm-selection error rates (right) for both completely randomized trials and batched arm elimination, as a function of the total sample size $T$. All results are aggregated across 10,000 simulation replications. Smoothers are added for visualization purposes only.}
    \label{fig:err_example}
\end{figure}

Results are shown in Figure \ref{fig:err_example}. We see that both experimental designs perform comparably with smaller sample sizes (in which case they both frequently make errors); however, as the sample size grows, the error of batched arm elimination decays faster than than of the non-adaptive design. Thus, at least in this design, batched arm elimination appears to give a ``safe'' improvement over non-adaptive experiments: We can benefit from adaptive in high-signal regimes without compromising performance in low-signal ones.

\newpage
\appendix

\section{Proof of Lemma \ref{lem:bae}}
\label{sec:proof of main}

For any policy $\pi$, we have the following bounds on its utilitarian regret $\mathfrak{R}^{\pi}_{\thetabf, T}$, defined in \eqref{eq:regret}:
\[
\Delta_{\min}(\thetabf)\cdot \Prob^{\pi}_{\thetabf,T}\left(\hat{I}_T \neq I^*\right)
\leq \mathfrak{R}^{\pi}_{\thetabf, T} \leq 
\Delta_{\max}(\thetabf)\cdot \Prob^{\pi}_{\thetabf,T}\left(\hat{I}_T \neq I^*\right),
\]
where $\Delta_{\min} = \theta_{I^*} - \max_{i\neq I^*}\theta_i$ and $\Delta_{\max} = \theta_{I^*} - \min_{i\neq I^*}\theta_i$.
We note that $\Delta_{\min}(\thetabf) > 0$ by uniqueness of the best arm (Assumption \ref{assu:gauss}), and that the optimality gaps become irrelevant in the asymptotic regime. Hence, to prove Lemma \ref{lem:bae}, it suffices to establish the following result.

\begin{lemma}
\label{lem:bae prob of error}
Under Assumption \ref{assu:gauss} and
for batched arm elimination with batch weights $(\beta_K, \beta_{K-1}, \ldots, \beta_2)\in\Sigma_{K-1}$ (Algorithm \ref{alg:bae}), 
\[
\lim_{T\rightarrow \infty }  \,\, -\frac{1}{T} \ln\left(\Prob^{\tt{BAE}}_{\thetabf,T}\left(\hat{I}_T \neq I^*\right)\right)  \geq \min_{n \in  \{K,K-1,\ldots,2\}}\,\, w_n\Gamma_{\thetabf,n},
\]
where $w_n$ and $\Gamma_{\thetabf,n}$ are defined in \eqref{eq:proportion} and \eqref{eq:gamma}, respectively.
\end{lemma}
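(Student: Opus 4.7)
My plan is to prove the sharper tail-probability statement in Lemma~\ref{lem:bae prob of error}; Lemma~\ref{lem:bae} then follows immediately from the regret sandwich at the start of Appendix~\ref{sec:proof of main}, since $\Delta_{\min}(\thetabf)>0$ and $\Delta_{\max}(\thetabf)<\infty$ are irrelevant under $-\frac{1}{T}\ln(\cdot)$.

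First, I would decompose the misidentification event. Let $C_n$ denote the BAE candidate set at the start of the batch beginning with $n$ remaining arms, and let $t_n = (\beta_K+\cdots+\beta_n)T$ be the time that batch ends. The event $\{\hat I_T \neq I^*\}$ is the disjoint union, over $n \in \{K,\ldots,2\}$, of the events $E_n$ that $I^*$ is the arm eliminated at time $t_n$. A union bound gives
\[
\Prob^{\tt{BAE}}_{\thetabf,T}(\hat I_T \neq I^*) \leq \sum_{n=2}^{K}\Prob^{\tt{BAE}}_{\thetabf,T}(E_n),
\]
so because there are finitely many batches, it suffices to prove that each summand decays at rate at least $w_n\Gamma_{\thetabf,n}$.

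Next, I would couple BAE to an idealized process in which, for each arm $i$, we pre-generate an infinite i.i.d.\ stream from $\mathcal{N}(\theta_i,\sigma^2)$ and BAE reveals only the first $N_{t,i}$ entries. Because the round-robin allocation within each batch is deterministic conditional on the candidate set, every arm surviving to time $t_n$ has received exactly $w_n T$ samples (up to unit rounding), so the empirical mean $m_{t_n,i}$ that BAE uses to decide its elimination coincides with the prefix average $\bar Y_{i,w_nT} := \frac{1}{w_nT}\sum_{\ell=1}^{w_nT}Y_{\ell,i}$ for every $i\in C_n$. Partitioning $E_n$ by the value of $C_n$ and dropping the event $\{C_n=J\}$ to obtain an upper bound yields
\[
\Prob^{\tt{BAE}}_{\thetabf,T}(E_n) \leq \sum_{J\in\mathcal{J}_{\thetabf,n}}\Prob\bigl(\bar Y_{I^*,w_nT}\leq \min_{i\in J\setminus\{I^*\}}\bar Y_{i,w_nT}\bigr).
\]

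Finally, I would apply a standard Cram\'er-type upper bound to each summand. The variables $(\bar Y_{i,w_nT})_{i\in J}$ are mutually independent with $\bar Y_{i,w_nT}\sim \mathcal{N}(\theta_i,\sigma^2/(w_nT))$, and the target event corresponds to the closed set $\Lambda_{\thetabf,J}=\{\lambdabf:\lambda_{I^*}\leq \min_{i\in J}\lambda_i\}$; an MGF/Chernoff calculation for independent Gaussians then gives
\[
\limsup_{T\rightarrow\infty}\frac{1}{T}\ln\Prob\bigl(\bar Y_{I^*,w_nT}\leq\min_{i\in J\setminus\{I^*\}}\bar Y_{i,w_nT}\bigr) \leq -w_n\inf_{\lambdabf\in\Lambda_{\thetabf,J}}\sum_{i\in J}d(\lambda_i,\theta_i).
\]
Minimizing over $J\in\mathcal{J}_{\thetabf,n}$ yields rate $w_n\Gamma_{\thetabf,n}$ for $\Prob^{\tt{BAE}}_{\thetabf,T}(E_n)$, and since a finite sum of exponentially decaying terms has rate equal to the slowest one, combining over $n$ produces the claimed bound $\min_n w_n\Gamma_{\thetabf,n}$. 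The main subtlety I anticipate is the coupling/rounding bookkeeping in the second step---one must verify that BAE's adaptively-used empirical mean really agrees with the prefix average $\bar Y_{i,w_nT}$, and that replacing $w_nT$ by its integer part does not change the exponential rate---both routine but requiring care; the LDP upper bound itself is a textbook Gaussian tail estimate.
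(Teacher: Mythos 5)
Your proposal is correct, and its skeleton---decomposing the error event over the batch $n$ at which $I^*$ is eliminated, then over the candidate set $J\in\mathcal{J}_{\thetabf,n}$, and using that finite sums of exponentially decaying terms inherit the slowest rate---matches the paper's proof exactly. Where you genuinely diverge is in the key analytic step that bounds $\Prob(\ell_n=I^*,\,C_n=J)$. The paper relaxes this event to $\{\bm{m}_{T_n}\in M,\ \bm{p}_{T_n}\in P\}$ and invokes Theorem~\ref{thm:LD}, a joint large-deviation upper bound on empirical means and empirical allocation proportions valid for \emph{any} non-anticipating algorithm whose proportions satisfy an LDP upper bound. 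You instead exploit a structural feature specific to BAE: because the within-batch allocation is round-robin and the batch lengths are deterministic, every arm surviving to time $t_n$ has received exactly $w_nT$ samples, so under the standard reward-stack coupling the adaptively computed means coincide with fixed-length prefix averages, and the bound reduces to a static Cram\'er/Chernoff estimate for $n$ independent Gaussians on the closed convex set $\Lambda_{\thetabf,J}$. Your route is more elementary and self-contained (no adaptive-sampling LDP machinery is needed), at the cost of generality---it leans on the determinism of BAE's sample counts, whereas the paper's Theorem~\ref{thm:LD} would continue to apply to designs with random allocation proportions. The details you flag as subtleties (rounding of $w_nT$, validity of the coupling, dropping $\{C_n=J\}$ after conditioning on survival) are indeed the only points requiring care, and all go through.
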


By the construction of BAE designs (Algorithm \ref{alg:bae}), we have
\begin{align*}
\Prob^{\tt{BAE}}_{\thetabf,T}\left(\hat{I}_T\neq I^*\right) 
=  \sum_{n\in\{K,\ldots,2\}}\Prob^{\tt{BAE}}_{\thetabf,T}\left(\ell_n = I^*\right),
\end{align*}
where $\ell_n$ is the arm eliminated at the end of the batch starting with $n$ arms.
Hence,
\[
\Prob^{\tt{BAE}}_{\thetabf,T}\left(\hat{I}_T\neq I^*\right) 
\leq (K-1)\max_{n\in\{K,\ldots,2\}}\Prob^{\tt{BAE}}_{\thetabf,T}\left(\ell_n = I^*\right),
\]
and thus
\begin{align}
\liminf_{T\rightarrow \infty }  \,\, -\frac{1}{T} \ln\left(\Prob^{\tt{BAE}}_{\thetabf,T}\left(\hat{I}_T \neq I^*\right)\right) 
&\geq \lim_{T\rightarrow \infty }  \,\, -\frac{1}{T}\ln\left(\max_{n\in\{K,\ldots,2\}}\Prob^{\tt{BAE}}_{\thetabf,T}\left(\ell_n = I^*\right)\right) \nonumber\\
&= \liminf_{T\rightarrow \infty }  \,\, \min_{n\in\{K,\ldots,2\}}-\frac{1}{T}\ln\left(\Prob^{\tt{BAE}}_{\thetabf,T}\left(\ell_n = I^*\right)\right) \nonumber\\
&= \min_{n\in\{K,\ldots,2\}}\liminf_{T\rightarrow \infty }  \,\, -\frac{1}{T}\ln\left(\Prob^{\tt{BAE}}_{\thetabf,T}\left(\ell_n = I^*\right)\right),
\label{eq:prob of error lower bound}
\end{align}
where the last equality holds since the set of potential values of $n$ is finite.

By \eqref{eq:prob of error lower bound}, establishing the lower bound in Lemma \ref{lem:bae prob of error} reduces to prove the following result.

\begin{lemma}
\label{lem:eliminate best}
For $n\in\{K,\ldots,2\}$,
\[
\liminf_{T\rightarrow \infty }  \,\, -\frac{1}{T} \ln\left(\Prob^{\tt{BAE}}_{\thetabf,T}\left(\ell_n = I^*\right)\right) \geq w_n\Gamma_{\thetabf,n}.
\]
\end{lemma}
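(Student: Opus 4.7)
The plan is to upper bound $\Prob^{\tt{BAE}}_{\thetabf,T}(\ell_n = I^*)$ by a union bound over which size-$n$ candidate set BAE arrives at before its $n$-arm batch, and then apply a Cramér-type Gaussian large deviation estimate to each summand. Since $\ell_n = I^*$ forces $I^*$ to have survived every earlier elimination, the candidate set $C_n$ at the start of the $n$-arm batch must belong to $\mathcal{J}_{\thetabf,n}$, and the elimination rule further implies $m_{t_n,I^*}\le \min_{i\in C_n} m_{t_n,i}$. Hence
\[
\Prob^{\tt{BAE}}_{\thetabf,T}(\ell_n = I^*)
\;\le\; \sum_{J\in \mathcal{J}_{\thetabf,n}}
\Prob^{\tt{BAE}}_{\thetabf,T}\bigl(C_n = J,\; m_{t_n,I^*}\le \min_{i\in J} m_{t_n,i}\bigr),
\]
where $t_n \triangleq (\beta_K+\cdots+\beta_n)T$ marks the end of the $n$-arm batch. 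Because $|\mathcal{J}_{\thetabf,n}|$ is a constant independent of $T$, it suffices to establish the rate $w_n\Gamma_{\thetabf,n}$ for the worst summand, after which taking $\min$ over $J$ reproduces the definition of $\Gamma_{\thetabf,n}$ in \eqref{eq:gamma}.

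To handle an individual summand I would decouple the adaptive empirical means via a ghost-sampling coupling: represent the outcomes through independent streams $\tilde{Y}_{1,i},\tilde{Y}_{2,i},\ldots \stackrel{\mathrm{iid}}{\sim}\mathcal{N}(\theta_i,\sigma^2)$ for each arm $i$, revealed whenever BAE selects that arm. The deterministic round-robin rule within each batch forces every surviving arm $i\in C_n$ to have been sampled exactly $w_nT$ times by $t_n$ (up to integer rounding), so on $\{C_n=J\}$ we have $m_{t_n,i}=\bar{\tilde{Y}}_{i,w_nT}$ for all $i\in J$, where $\bar{\tilde{Y}}_{i,k}\triangleq k^{-1}\sum_{\ell\le k}\tilde{Y}_{\ell,i}$. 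Dropping the indicator $\ind(C_n=J)$ therefore yields
\[
\Prob^{\tt{BAE}}_{\thetabf,T}\bigl(C_n=J,\;m_{t_n,I^*}\le \min_{i\in J} m_{t_n,i}\bigr)
\;\le\; \Prob\bigl((\bar{\tilde{Y}}_{i,w_nT})_{i\in J}\in \Lambda_{\thetabf,J}\bigr),
\]
whose right-hand side depends only on the $n$ jointly independent Gaussian empirical means $(\bar{\tilde{Y}}_{i,w_nT})_{i\in J}$, each with variance $\sigma^2/(w_nT)$.

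Because $\Lambda_{\thetabf,J}$ is a closed subset of $\mathbb{R}^J$ and $\lambdabf\mapsto \sum_{i\in J} d(\lambda_i,\theta_i)$ is a sum of quadratic Gaussian KL terms (hence continuous and coercive), the Cramér upper bound for i.i.d.\ Gaussian vectors applied with $w_nT$ samples gives
\[
\limsup_{T\to\infty} -\tfrac{1}{T}\ln \Prob\bigl((\bar{\tilde{Y}}_{i,w_nT})_{i\in J}\in \Lambda_{\thetabf,J}\bigr)
\;\ge\; w_n\,\inf_{\lambdabf\in\Lambda_{\thetabf,J}}\sum_{i\in J}d(\lambda_i,\theta_i).
\]
Summing the exponentially small bounds over the finite index set $\mathcal{J}_{\thetabf,n}$ preserves the exponential rate, and exchanging $\min_J$ with $\liminf_T$ produces the advertised lower bound $w_n\Gamma_{\thetabf,n}$.

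The main obstacle is the ghost-sampling decoupling step in the middle paragraph. Naively one cannot just write $m_{t_n,i}=\bar{\tilde{Y}}_{i,w_nT}$ because the indices $\{\ell:I_\ell=i\}$ are algorithm-dependent and the sample count $N_{t_n,i}$ is itself random in general; the argument relies on verifying that, on $\{C_n=J\}$, the samples of arm $i$ observed by BAE are precisely the first $w_nT$ elements of its ghost stream, so that the relevant empirical means inherit genuine Gaussian tails and the indicator of $\{C_n=J\}$ can be harmlessly dropped. A lesser technical nuisance is rounding when $w_nT\notin\mathbb{Z}$: each batch boundary can be off by $\pm 1$ sample, but these corrections contribute only polynomial factors that vanish under $\liminf_T(-T^{-1}\ln\cdot)$.
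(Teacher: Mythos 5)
Your proof is correct and reaches the stated bound, but the key technical step is handled by a genuinely different (and more elementary) device than the paper's. Both arguments open identically: a union bound over $J\in\mathcal{J}_{\thetabf,n}$, the observation that $\{\ell_n=I^*,\,C_n=J\}$ forces $m_{t_n,I^*}\le\min_{i\in J}m_{t_n,i}$, and the fact that each arm of $J$ holds exactly $w_nT$ samples at the end of the $n$-arm batch. From there the paper passes to the joint event $\{\bm{m}_{T_n}\in M,\ \bm{p}_{T_n}\in P\}$ and invokes a general large-deviation theorem for non-anticipating algorithms (Theorem \ref{thm:LD}, extending \citet{wang2023best}), treating the empirical allocation $\bm{p}_{T_n}$ as a random object satisfying an LDP and extracting the factor $\tilde{p}$ from the rate function $F_{\thetabf,M}$. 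You instead exploit that, conditional on $C_n=J$, BAE's round-robin counts are deterministic, couple to independent ghost streams, drop the indicator of $\{C_n=J\}$, and apply the plain multivariate Cram\'er upper bound for a closed convex set. Your route is more self-contained and avoids the LDP machinery for adaptive allocations entirely; the paper's buys generality, since Theorem \ref{thm:LD} applies to arbitrary non-anticipating sampling rules rather than only those with deterministic conditional counts. The decoupling worry you flag resolves exactly as you suspect: survival of $i\in J$ to stage $n$ means $i$ belonged to every earlier candidate set, each of which allocates $\beta_r T/r$ pulls to each of its members, so $N_{t_n,i}=w_nT$ deterministically on $\{C_n=J\}$ and the observed samples of arm $i$ are precisely the first $w_nT$ entries of its ghost stream. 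One small slip to fix: your displayed Cram\'er estimate should assert $\liminf_{T\to\infty}-\tfrac{1}{T}\ln\Prob(\cdot)\ \ge\ w_n\inf_{\lambdabf\in\Lambda_{\thetabf,J}}\sum_{i\in J}d(\lambda_i,\theta_i)$ rather than $\limsup$; the Cram\'er upper bound $\limsup_k k^{-1}\ln\Prob(\bar{Y}_k\in F)\le-\inf_F I$ is precisely the $\liminf$ statement for the negated quantity, and it is the $\liminf$ version you need to pass the bound through the final minimum over $J$ and $n$.
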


\begin{proof}
We have
\begin{equation}
\label{eq:eliminate best}
\Prob^{\tt{BAE}}_{\thetabf,T}\left(\ell_n = I^*\right)
= \sum_{J \in \mathcal{J}_{\thetabf,n}} \Prob^{\tt{BAE}}_{\thetabf,T}\left(\ell_n = I^*, C_n = J\right)
\leq |\mathcal{J}_{\thetabf,n}| \cdot \max_{J \in \mathcal{J}_{\thetabf,n}} \Prob^{\tt{BAE}}_{\thetabf,T}\left(\ell_n = I^*, C_n = J\right).
\end{equation}
Fix $J \in \mathcal{J}_{\thetabf,n}$. Since the arm $\ell_n$ is eliminated at the end of $T_n \triangleq (\beta_K+\cdots + \beta_n)\cdot T$ timesteps, the event $\{\ell_n = I^*, C_n = J\}$ implies that
$\{\bm{m}_{T_n}\in M, \bm{p}_{T_n}\in P\}$, where
\[
M = \left\{\lambdabf\in\mathbb{R}^K: \lambda_{I^*}\leq \lambda_i,\forall i \in J\right\}
\quad\text{and}\quad
P = \left\{\bm{p}\in\Sigma_K: p_i = \tilde{p},\forall i\in J\right\}.
\]
where $\tilde{p} = \frac{\left(\frac{\beta_K}{K} + \cdots + \frac{\beta_n}{n}\right)T}{T_n} = \frac{\frac{\beta_K}{K} + \cdots + \frac{\beta_n}{n}}{\beta_K + \cdots + \beta_n}$.
That is, 
\[
\Prob^{\tt{BAE}}_{\thetabf,T}\left(\ell_n = I^*, C_n = J\right) \leq 
\Prob^{\tt{BAE}}_{\thetabf,T}\left(\bm{m}_{T_n} \in M, \bm{p}_{T_n}\in P\right),
\]
and thus
\begin{align*}
\liminf_{T\rightarrow \infty }  \,\, -\frac{1}{T} 
\ln\left( \Prob^{\tt{BAE}}_{\thetabf,T}\left(\ell_n = I^*, C_n = J\right) \right)
&\geq 
\liminf_{T\rightarrow \infty }  \,\, -\frac{1}{T} 
\ln\left( \Prob^{\tt{BAE}}_{\thetabf,T}\left(\bm{m}_{T_n}\in M, \bm{p}_{T_n} \in P\right) \right) \\
&= \left(\beta_K + \cdots + \beta_n \right)\liminf_{T\rightarrow \infty }  \,\, -\frac{1}{T_n} 
\ln\left( \Prob^{\tt{BAE}}_{\thetabf,T}\left(\bm{m}_{T_n}\in M, \bm{p}_{T_n} \in P\right) \right) \\
&\geq  \left(\beta_K + \cdots + \beta_n \right)\inf_{\bm{p}\in \cl(P)}F_{\thetabf,M}(\bm{p})\\
&\geq \left(\beta_K + \cdots + \beta_n \right) \tilde{p}
\inf_{\lambdabf \in \cl (M) } \sum_{i \in J}d(\lambda_i,\theta_i)\\
&= \left(\frac{\beta_K}{K} + \cdots + \frac{\beta_n}{n}\right)
\inf_{\lambdabf \in \cl (M) } \sum_{i \in J}d(\lambda_i,\theta_i) \\
&=w_n\Gamma_{\thetabf,n},
\end{align*}
where the second inequality uses Theorem \ref{thm:LD}.
Applying the inequality in \eqref{eq:eliminate best} completes the proof.
\end{proof}

\subsection{Large-deviation results}
The proof of Lemma \ref{lem:eliminate best} above relies on the following large-deviation results, which extend \citet[Theorem 1]{wang2023best} for Gaussian distributions.
\begin{Theorem}
\label{thm:LD}
Fix $\thetabf\in\Theta$. For a non-anticipating algorithm, if the empirical allocation sequence $\{\bm{p}_t\}_{t\geq 1}$ satisfies the large deviation principle (LDP) upper bound with rate function $I$, then 
\begin{enumerate}
    \item $\{\bm{m}_t\}_{t\geq 1}$ satisfies the LDP lower bound with rate function 
    \[
    \lambdabf \mapsto \min_{\bm{p}\in \Sigma_K}\max\{ \Psi_{\thetabf}(\lambdabf,\bm{p}),I(\bm{p})\},
    \]
    where $\Psi_{\thetabf}(\lambdabf,\bm{p}) \triangleq \sum_{i=1}^K p_i \cdot d(\lambda_i ,\theta_i)$;
    \item 
    Given $n\in\{2,\ldots, K\}$, consider a set $J\subseteq \mathbb{R}^K$ such that $|J| = n$ and $I^*\in J$. Then for the set 
    \[
    M \triangleq \left\{\lambdabf\in\mathbb{R}^K:\,\,  \lambda_{I^*}\leq \lambda_i, \,\, \forall i \in J\right\}
    \]
    and any Borel subset $P\subseteq \Sigma_K$,
    \[
    \liminf_{t\rightarrow \infty }  \,\,  - \frac{1}{t} \ln 
    \left(\Prob_{\thetabf}\left(  \bm{m}_t\in  M, \,\, \bm{p}_t\in P\right)\right) \geq  \inf_{\bm{p}\in \cl(P)} \max\left\{ F_{\thetabf,M}(\bm{p}),I(\bm{p})\right\},
    \]
    where $F_{\thetabf,M}(\bm{p})\triangleq\inf_{\lambdabf \in \cl (M) }\Psi_{\thetabf}(\lambdabf,\bm{p})$,
    and $\cl (A)$ denotes the closure of the set $A$.
\end{enumerate}
\end{Theorem}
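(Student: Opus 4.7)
The plan is to establish Part 2 first; Part 1 will then follow immediately by applying the same argument with $P=\Sigma_K$, replacing $M$ by a general closed set $F\subseteq\RR^K$, and performing a minimax swap. The strategy marries a pathwise Gaussian Cram\'er bound for $\bm{m}_t$ on each realization of the count vector $\bm{N}_t$ with the hypothesized LDP upper bound for $\{\bm{p}_t\}$, then glues these bounds together by discretizing $\cl(P)$ and taking a union bound over the resulting pieces of the simplex.

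The key quantitative ingredient is a pointwise-in-$\bm{n}$ comparison. I couple the bandit observations to underlying i.i.d.\ sequences $X^i_1,X^i_2,\dots\sim\mathcal{N}(\theta_i,\sigma^2)$, independent across arms, so that on the event $\{\bm{N}_t=\bm{n}\}$ one has the pathwise identity $m_{t,i}=\tfrac{1}{n_i}\sum_{k=1}^{n_i}X^i_k$; this coupling is exact because non-anticipation makes the bandit likelihood factor as $\prod_\ell p_{\theta_{I_\ell}}(Y_{\ell,I_\ell})$. Writing $\bar X^{\bm n}_i=\tfrac{1}{n_i}\sum_{k=1}^{n_i}X^i_k$, the deterministic inclusion $\{\bm m_t\in M,\bm N_t=\bm n\}\subseteq\{\bar{\bm X}^{\bm n}\in M\}$ combined with the multivariate Chernoff bound for independent Gaussians yields
\[
\Prob_{\thetabf}\bigl(\bm m_t\in M,\,\bm N_t=\bm n\bigr)\ \leq\ \exp\!\bigl(-t\,F_{\thetabf,M}(\bm n/t)\bigr).
\]
This is the single quantitative lemma that powers everything else.

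With this in hand, fix $\epsilon>0$ and partition $\cl(P)$ into finitely many Borel pieces $B_1,\ldots,B_N$ of diameter at most $\epsilon$. Summing the display over the at most $(t+1)^K$ integer vectors $\bm n$ with $\bm n/t\in B_k$ absorbs the count into an $o(t)$ exponent and gives $\Prob_{\thetabf}(\bm m_t\in M,\bm p_t\in B_k)\leq\exp(-t\inf_{\bm p\in B_k}F_{\thetabf,M}(\bm p)+o(t))$. Separately, $\Prob_{\thetabf}(\bm m_t\in M,\bm p_t\in B_k)\leq\Prob_{\thetabf}(\bm p_t\in B_k)\leq\exp(-t\inf_{\bm p\in B_k}I(\bm p)+o(t))$ by the hypothesized LDP upper bound for $\bm p_t$. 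Taking the minimum of these two bounds and union-bounding over the $N$ pieces (which costs only $(\ln N)/t\to 0$),
\[
\liminf_{t\to\infty}\,-\tfrac{1}{t}\ln\Prob_{\thetabf}\bigl(\bm m_t\in M,\bm p_t\in P\bigr)\ \geq\ \min_{k}\max\!\Bigl\{\inf_{\bm p\in B_k}F_{\thetabf,M}(\bm p),\ \inf_{\bm p\in B_k}I(\bm p)\Bigr\}.
\]
Sending $\epsilon\to 0$ and using lower semicontinuity of both $F_{\thetabf,M}$ and $I$ on $\Sigma_K$ lets the right-hand side converge to $\inf_{\bm p\in\cl(P)}\max\{F_{\thetabf,M}(\bm p),I(\bm p)\}$, proving Part 2. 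For Part 1, the identical argument with $P=\Sigma_K$ and a generic closed $F$ in place of $M$ gives $\liminf-\tfrac{1}{t}\ln\Prob_{\thetabf}(\bm m_t\in F)\geq\inf_{\bm p\in\Sigma_K}\max\{\inf_{\lambdabf\in F}\Psi_{\thetabf}(\lambdabf,\bm p),I(\bm p)\}$, and the elementary identity $\max\{\inf_\lambda a_\lambda,b\}=\inf_\lambda\max\{a_\lambda,b\}$ combined with the compactness of $\Sigma_K$ lets me swap the order of infima to produce exactly the stated rate $\inf_{\lambdabf\in F}\min_{\bm p\in\Sigma_K}\max\{\Psi_{\thetabf}(\lambdabf,\bm p),I(\bm p)\}$.

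The main obstacle I anticipate is the boundary of $\Sigma_K$: when some piece $B_k$ contains a point with $p_i=0$, the coupled variance $\sigma^2/n_i$ blows up, $\lambda_i$ becomes unconstrained in the infimum defining $F_{\thetabf,M}$, and one must verify that taking the supremum over $\bm n/t\in B_k$ of the pointwise bound $\exp(-tF_{\thetabf,M}(\bm n/t))$ does not lose an unbounded amount in the exponent. In the Gaussian case this turns out to be benign because $\Psi_{\thetabf}(\lambdabf,\bm p)=\sum_i p_i(\lambda_i-\theta_i)^2/(2\sigma^2)$ is jointly continuous on $\RR^K\times\Sigma_K$ and coordinates with $p_i=0$ drop out of both $\Psi_{\thetabf}$ and its infimum, so $F_{\thetabf,M}$ extends lower semicontinuously to the closed simplex. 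This extension of the Wang--Mannor-style argument of \citet{wang2023best,wang24c} from bounded rewards to unbounded Gaussian observations is the main technical delta, but is a bookkeeping step (careful restriction of the $\lambdabf$-infimum to the active indices) rather than a conceptual obstacle.
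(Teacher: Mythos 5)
The paper states Theorem \ref{thm:LD} without its own proof, deferring to \citet[Theorem 1]{wang2023best}, and your argument reconstructs essentially that same route: couple to i.i.d.\ reward stacks via non-anticipation, condition on the count vector to get the Chernoff bound $\Prob_{\thetabf}(\bm m_t\in M,\bm N_t=\bm n)\le\exp\left(-tF_{\thetabf,M}(\bm n/t)\right)$ (valid here since $M$ is closed and convex), combine with the hypothesized upper bound for $\{\bm p_t\}$ over a finite cover of $\cl(P)$ with a polynomial count of integer allocation vectors, and pass to the limit using lower semicontinuity and a minimax swap for Part~1. Your identification of the unbounded support and the boundary of the simplex (coordinates with $p_i=0$ dropping out of $\Psi_{\thetabf}$) as the only substantive delta from the bounded-reward setting of \citet{wang2023best} is accurate, so I see no gap.
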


\section{Proof of Lemma \ref{lem:lower bound}}
\label{sec:lower bound}

Fix $n \in \{K,\ldots, 2\}$. For $\mu_1 > \mu_2\geq \cdots \geq \mu_n$, we define
 \begin{align}\label{opt:1}
	\Psi(\mu_1,\mu_2,\ldots,\mu_n) \triangleq 
\inf_{\substack{(\lambda_1,\ldots,\lambda_n)\in \mathbb{R}^n: \\ \lambda_{1}\le \min_{i \in [n]} \lambda_i}} 
    \sum_{i=1}^n (\lambda_i-\mu_i)^2.
\end{align}
Given this, we can write
\[
\Gamma_{\thetabf,n} = \min_{J\in \mathcal{J}_{\thetabf, n}} \inf_{\lambdabf\in \Lambda_{\thetabf, J}}
\sum_{i \in J}d(\lambda_i,\theta_i) = \frac{1}{2\sigma^2}\min_{J\in \mathcal{J}_{\thetabf, n}} \Psi\left(\theta_{I^*}, \theta_{(2)_J},\ldots, \theta_{(n)_J}\right),
\]
where $(i)_J$ denotes the index of the $i$-th largest element in the set $J$ for $i=2,\ldots,n$; note that the largest element in $J$ is $I^*$.

To prove Lemma \ref{lem:lower bound}, it suffices to prove the following result: 
\begin{lemma}[Bounds on $\Psi(\mu_1,\mu_2,\ldots,\mu_n)$]
\label{lem:xi lower bound}
The value $\Psi(\mu_1,\mu_2,\ldots,\mu_n)$ can be lower bounded as follows,
\[
\Psi(\mu_1,\mu_2,\ldots,\mu_n) \geq \frac{n-1}{n}(\mu_1 - \mu_2)^2,
\]
where the inequality becomes equality when $\mu_2 = \cdots = \mu_n$.
On the other hand, $\Psi(\mu_1,\mu_2,\ldots,\mu_n)$ can be upper bounded:
\[
\Psi(\mu_1,\mu_2,\ldots,\mu_n)  < \sum_{i=2}^n (\mu_1 - \mu_i)^2.
\]
\end{lemma}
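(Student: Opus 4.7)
The plan is to first profile out $\lambda_2,\ldots,\lambda_n$ from \eqref{opt:1} and reduce $\Psi$ to a one-dimensional problem in $\lambda_1$. For any fixed $\lambda_1$, the inner minimization $\min\{(\lambda_i-\mu_i)^2:\lambda_i\geq\lambda_1\}$ is solved by $\lambda_i^\star=\max(\lambda_1,\mu_i)$ with optimal value $((\lambda_1-\mu_i)_+)^2$. This yields the identity
\[
\Psi(\mu_1,\ldots,\mu_n)\;=\;\inf_{\lambda_1\in\mathbb{R}} f(\lambda_1),\qquad f(\lambda_1)\triangleq (\lambda_1-\mu_1)^2+\sum_{i=2}^n\bigl((\lambda_1-\mu_i)_+\bigr)^2,
\]
after which the two bounds follow from separate elementary arguments.

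For the lower bound I would split on whether $\lambda_1\geq\mu_2$. In that regime, the assumption $\mu_i\leq\mu_2$ for $i\geq 2$ gives $(\lambda_1-\mu_i)_+\geq \lambda_1-\mu_2\geq 0$, hence
\[
f(\lambda_1)\;\geq\;(\lambda_1-\mu_1)^2+(n-1)(\lambda_1-\mu_2)^2,
\]
a convex quadratic in $\lambda_1$ whose global minimum equals $\tfrac{n-1}{n}(\mu_1-\mu_2)^2$, attained at $\lambda_1^\star=(\mu_1+(n-1)\mu_2)/n=\mu_2+(\mu_1-\mu_2)/n$, which indeed lies in this regime. In the complementary regime $\lambda_1<\mu_2$, keeping only the first term gives $f(\lambda_1)\geq(\mu_1-\lambda_1)^2>(\mu_1-\mu_2)^2\geq\tfrac{n-1}{n}(\mu_1-\mu_2)^2$. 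Combining the two regimes yields the claimed lower bound. For the equality clause, when $\mu_2=\cdots=\mu_n$ the bound $(\lambda_1-\mu_i)_+\geq\lambda_1-\mu_2$ holds with equality, so $f(\lambda_1^\star)=\tfrac{n-1}{n}(\mu_1-\mu_2)^2$ exactly.

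For the strict upper bound I would exhibit a feasible perturbation of the trivial candidate $\lambda=(\mu_1,\ldots,\mu_1)$ (which has cost $\sum_{i=2}^n(\mu_1-\mu_i)^2$). Setting $\lambda_1=\lambda_2=\cdots=\lambda_n=\mu_1-\epsilon$ for $\epsilon\geq 0$ keeps the constraint tight, and the resulting cost
\[
g(\epsilon)\;=\;\epsilon^2+\sum_{i=2}^n(\mu_1-\mu_i-\epsilon)^2\;=\;n\epsilon^2-2\epsilon\sum_{i=2}^n(\mu_1-\mu_i)+\sum_{i=2}^n(\mu_1-\mu_i)^2
\]
satisfies $g'(0)=-2\sum_{i=2}^n(\mu_1-\mu_i)<0$ by the hypothesis $\mu_1>\mu_2\geq\mu_i$. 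Hence $g(\epsilon)<g(0)$ for all sufficiently small $\epsilon>0$, giving $\Psi<\sum_{i=2}^n(\mu_1-\mu_i)^2$.

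I expect the only delicate step to be the lower-bound case analysis, specifically verifying that the unconstrained minimizer $\lambda_1^\star$ of the relaxed quadratic really does lie in the regime $\lambda_1\geq\mu_2$ where the relaxation is valid; this reduces to the one-line identity $\lambda_1^\star-\mu_2=(\mu_1-\mu_2)/n>0$. Everything else is routine one-variable calculus.
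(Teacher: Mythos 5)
Your proof is correct, and it takes a genuinely different route from the paper's. The paper invokes the closed-form expression for $\Psi$ from \citet[Proposition 1]{wang2023best}, rewrites it in terms of the gaps $\Delta_i=\mu_1-\mu_i$, and then runs a case analysis over which of the $n-1$ branches of that piecewise formula is active, using the branch condition (and Cauchy--Schwarz in the final branch) to derive the lower bound; the strict upper bound is read off from the formulas, each of which equals a partial sum of $\Delta_i^2$ minus a strictly positive correction. You instead profile out $\lambda_2,\dots,\lambda_n$ exactly---noting that the inner minimizer is $\lambda_i^\star=\max(\lambda_1,\mu_i)$---to reduce $\Psi$ to the one-dimensional convex problem $\inf_{\lambda_1}f(\lambda_1)$, and then bound $f$ directly by splitting on $\lambda_1\ge\mu_2$ versus $\lambda_1<\mu_2$; the strict upper bound comes from a first-order perturbation of the feasible point $\lambda=(\mu_1,\dots,\mu_1)$. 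Your argument is self-contained (it does not rely on the external closed-form result) and arguably more transparent about why the worst case is $\mu_2=\cdots=\mu_n$, since that is exactly when the pointwise bound $(\max(\lambda_1-\mu_i,0))^2\ge(\lambda_1-\mu_2)^2$ is tight; what it gives up is the exact piecewise value of $\Psi$, which the paper's Corollary~\ref{cor:xi} records and which is also what makes the paper's upper bound immediate. All the delicate steps you flag check out: the relaxed quadratic's minimizer satisfies $\lambda_1^\star-\mu_2=(\mu_1-\mu_2)/n>0$, so it lies in the regime where the relaxation is valid and where, under $\mu_2=\cdots=\mu_n$, $f$ coincides with the relaxed quadratic, yielding the equality clause.
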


The proof starts with the calculation of $\Psi(\mu_1,\mu_2,\ldots,\mu_n)$, which uses the following result in \citet{wang2023best}:
\begin{proposition}[Proposition 1 in \citet{wang2023best}]
\label{prop:xi}
	The value $\Psi(\mu_1,\mu_2,\ldots,\mu_n)$ can be calculated as follows,
	$$
	\Psi(\mu_1,\mu_2,\ldots,\mu_n)=\left\{
	\begin{array}{ll}
	\sum_{i=1,n} \left(\mu_i-\frac{\mu_1+\mu_n}{2}\right)^2,&\hbox{ if }\mu_{n-1}\ge \frac{\mu_1+\mu_n}{2},\\
	\sum_{i=1,n-1,n} \left(\mu_i-  \frac{\mu_1+\mu_{n-1}+\mu_n}{3} \right)^2,&\hbox{ if }\mu_{n-1}< \frac{\mu_1+\mu_n}{2},\,\,\mu_{n-2}\ge \frac{\mu_1+\mu_{n-1}+\mu_n}{3}, \\
	\vdots&\vdots\\
	\sum_{i=1}^n \left(\mu_i -  \frac{\sum_{j=1}^n\mu_j}{n} \right)^2,&\hbox{ if }\mu_{n-1}< \frac{\mu_1+\mu_n}{2},\,\,\ldots,\,\, \mu_2< \frac{\mu_1+\mu_3+\cdots+\mu_n}{n-1}.
	\end{array}
	\right.
	$$
\end{proposition}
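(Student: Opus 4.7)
The plan is to characterize the minimizer of the strictly convex quadratic program defining $\Psi$ via KKT, then read off $\Psi$ by substitution. Attaching multipliers $\nu_i \geq 0$ to each constraint $\lambda_1 - \lambda_i \leq 0$ and writing stationarity of the Lagrangian $\sum_i (\lambda_i - \mu_i)^2 + \sum_{i\geq 2}\nu_i(\lambda_1 - \lambda_i)$ yields $\lambda_1 = \mu_1 - \frac{1}{2}\sum_{i\geq 2}\nu_i$ and $\lambda_i = \mu_i + \nu_i/2$ for $i \geq 2$. Defining the active set $S = \{1\} \cup \{i \geq 2 : \nu_i > 0\}$, complementary slackness forces all $\lambda_i$ with $i \in S$ to take a common value $c$; summing the stationarity equations over $S$ pins $c = |S|^{-1}\sum_{i \in S}\mu_i$, while $\lambda_i = \mu_i$ for $i \notin S$.

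The key structural observation is that $S \setminus \{1\}$ must be a suffix $\{k, k+1, \ldots, n\}$ for some $k \in \{2, \ldots, n\}$. Indeed, dual feasibility $\nu_i \geq 0$ on $S \setminus \{1\}$ gives $\mu_i \leq c$, while primal feasibility on $i \notin S$ gives $\mu_i \geq c$; combined with the ordering $\mu_2 \geq \mu_3 \geq \cdots \geq \mu_n$, the set of indices $i \geq 2$ satisfying $\mu_i \leq c$ must form a suffix. (The trivial case $S = \{1\}$ is infeasible because $\lambda_1 = \mu_1 > \mu_n$ would then violate $\lambda_1 \leq \lambda_n$.) Writing $c_k \triangleq (\mu_1 + \mu_k + \cdots + \mu_n)/(n-k+2)$, the remaining KKT feasibility conditions for active set $\{1, k, \ldots, n\}$ reduce to (a)~$\mu_{k-1} \geq c_k$ (vacuous when $k=2$) and (b)~$\mu_k \leq c_k$.

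To translate these conditions into the form stated in the proposition, I would invoke the elementary averaging identity that adjoining $\mu_k$ to the average $c_{k+1}$ of $\{\mu_1, \mu_{k+1}, \ldots, \mu_n\}$ produces $c_k$, with $c_k$ lying between $c_{k+1}$ and $\mu_k$; consequently $\mu_k \leq c_k \Longleftrightarrow \mu_k \leq c_{k+1}$, so condition (b) rewrites as $\mu_k \leq \frac{\mu_1 + \mu_{k+1} + \cdots + \mu_n}{n-k+1}$. Iterating this, ``case $k$'' occurs precisely when condition (a) fails for every larger $k' > k$---yielding the chain of strict inequalities $\mu_{n-1} < \frac{\mu_1+\mu_n}{2},\, \ldots,\, \mu_k < \frac{\mu_1 + \mu_{k+1} + \cdots + \mu_n}{n-k+1}$ listed in the proposition---while (a) holds at the chosen $k$, giving the final weak inequality. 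These conditions are mutually exclusive across $k$ and cover all $\mu$, so they uniquely identify $k$. Substituting $\lambda_i = c_k$ for $i \in S$ and $\lambda_i = \mu_i$ otherwise into the objective yields $\Psi = \sum_{i \in \{1, k, \ldots, n\}}(\mu_i - c_k)^2$, matching each line of the case-by-case formula. The main obstacle is the bookkeeping in Step~3: carefully verifying that the KKT feasibility conditions translate cleanly into the nested chain of strict/weak inequalities stated, and that the strict inequalities in the chain encode exactly the infeasibility of all larger choices of active set.
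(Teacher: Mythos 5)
Your KKT argument is correct, but note that the paper does not prove this statement at all: Proposition~\ref{prop:xi} is imported verbatim as Proposition~1 of \citet{wang2023best}, so there is no in-paper proof to compare against. What you have written is a valid self-contained derivation: the problem is the Euclidean projection of $\mu$ onto the polyhedral cone $\{\lambda:\lambda_1\le\lambda_i\}$, stationarity plus complementary slackness correctly force the active coordinates to share the value $c=|S|^{-1}\sum_{i\in S}\mu_i$ (your cancellation $\sum_{i\ge 2}\nu_i=\sum_{i\in S\setminus\{1\}}\nu_i$ is what makes $c$ the plain average), the monotonicity of $\mu_2\ge\cdots\ge\mu_n$ forces $S\setminus\{1\}$ to be a suffix up to ties at $c$ (ties are harmless since they contribute zero to the objective and do not change $c$), and the averaging identity $c_k=\frac{(n-k+1)c_{k+1}+\mu_k}{n-k+2}$ correctly converts dual feasibility $\mu_k\le c_k$ into the condition $\mu_k\le c_{k+1}$ appearing in the proposition, with the strict-inequality convention in the displayed chain serving only to make the cases disjoint (the boundary instances give equal values under adjacent formulas). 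Since the constraints are linear and the objective is strictly convex and coercive, the KKT point you exhibit is the unique global minimizer, and substitution gives exactly $\sum_{i\in\{1,k,\ldots,n\}}(\mu_i-c_k)^2$. The only bookkeeping worth making fully explicit in a polished write-up is the tie-handling and the disjointness/exhaustiveness of the case chain, both of which you already flag.
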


We can simplify the above formulas by introducing $\Delta_i = \mu_1 - \mu_i$ for $i=1, 2,\ldots, n$.

\begin{corollary}\label{cor:xi}
	The value $\Psi(\mu_1,\mu_2,\ldots,\mu_n)$ can be rewritten as follows,
	$$
	\Psi(\mu_1,\mu_2,\ldots,\mu_n)=\left\{
	\begin{array}{ll}
	\frac{\Delta_n^2}{2},&\hbox{ if }\Delta_{n-1}\le \frac{\Delta_n}{2},\\
	\left(\sum_{i=n-1}^n\Delta_{i}^2\right) - \frac{\left(\sum_{i=n-1}^n\Delta_{i}\right)^2}{3},&\hbox{ if }\Delta_{n-1}> \frac{\Delta_n}{2},\,\,\Delta_{n-2}\le \frac{\sum_{i=n-1}^n\Delta_{i}}{3}, \\
	\vdots&\vdots\\
	 \left(\sum_{i=2}^n\Delta_{i}^2\right) - \frac{\left(\sum_{i=2}^n\Delta_{i}\right)^2}{n},&\hbox{ if }\Delta_{n-1} > \frac{\Delta_n}{2},\,\,\ldots,\,\,
  \Delta_2 > \frac{\sum_{i=3}^n\Delta_i}{n-1}.
	\end{array}
	\right.
	$$
\end{corollary}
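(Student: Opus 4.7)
The plan is to obtain Corollary~\ref{cor:xi} as a direct change of variables in Proposition~\ref{prop:xi}, substituting $\mu_i = \mu_1 - \Delta_i$ (so that $\Delta_1 = 0$) throughout. Each of the piecewise branches in Proposition~\ref{prop:xi} then rewrites in the form claimed by the corollary; nothing beyond careful bookkeeping is required.

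First I would translate the conditions. Each inequality in Proposition~\ref{prop:xi} has the shape $\mu_j \ge \bar{\mu}_S$ or $\mu_j < \bar{\mu}_S$, where $\bar{\mu}_S$ denotes the mean of $\{\mu_1\}\cup\{\mu_i : i \in S\}$ for a trailing subset $S \subseteq \{k,\ldots,n\}$. Substituting $\mu_i = \mu_1 - \Delta_i$ on both sides cancels $\mu_1$ and flips the direction of the inequality, yielding $\Delta_j \le \frac{1}{|S|}\sum_{i\in S}\Delta_i$ or $\Delta_j > \frac{1}{|S|}\sum_{i\in S}\Delta_i$ respectively. Running through each line of the proposition in order reproduces exactly the chain of conditions listed in the corollary.

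Second I would translate the values. Each value in Proposition~\ref{prop:xi} has the form $\sum_{i\in T}(\mu_i-\bar{\mu}_T)^2$, with $T=\{1\}\cup\{k,k+1,\ldots,n\}$ and $\bar{\mu}_T$ the mean of $\{\mu_i\}_{i\in T}$. Since the sum of squared deviations from the mean is translation invariant, I shift all coordinates by $-\mu_1$; the shifted values are $y_1=0$ and $y_j=-\Delta_j$ for $j\in\{k,\ldots,n\}$. Applying the identity $\sum_{i\in T}(y_i-\bar{y}_T)^2 = \sum_{i\in T}y_i^2 - |T|\,\bar{y}_T^2$ with $|T|=n-k+2$ and $\bar{y}_T = -\frac{1}{n-k+2}\sum_{j=k}^n \Delta_j$ gives $\sum_{j=k}^n \Delta_j^2 - \frac{\bigl(\sum_{j=k}^n \Delta_j\bigr)^2}{n-k+2}$, which is precisely the formula appearing in Corollary~\ref{cor:xi}. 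The top line of the corollary is the special case $k=n$, where both sums collapse to a single term and the expression reduces to $\Delta_n^2/2$.

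I do not anticipate any substantive mathematical obstacle: the argument is entirely algebraic. The only steps that benefit from care are reversing each inequality sign when passing from $\mu$-variables to $\Delta$-variables, and matching the cardinality $|T|=n-k+2$ of the averaging set against the denominator appearing in each value formula. Getting these two indexing details right is the only place where a mistake is likely.
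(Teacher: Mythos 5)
Your proof is correct and takes essentially the same route as the paper's: substitute $\Delta_i=\mu_1-\mu_i$ into each branch of Proposition~\ref{prop:xi} and simplify the sum of squared deviations (the paper expands the square by hand and collects terms, whereas you invoke the identity $\sum_{i\in T}(y_i-\bar y_T)^2=\sum_{i\in T}y_i^2-|T|\,\bar y_T^2$ after translating by $-\mu_1$; the computations are identical). One indexing slip of exactly the kind you warned about: since $\bar\mu_S$ averages the $|S|+1$ values $\{\mu_1\}\cup\{\mu_i:i\in S\}$, the translated condition is $\Delta_j\le\frac{1}{|S|+1}\sum_{i\in S}\Delta_i$ rather than $\frac{1}{|S|}\sum_{i\in S}\Delta_i$; with that correction the chain of conditions matches the corollary (denominators $2,3,\ldots,n-1$).
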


\begin{proof}[Proof of Corollary \ref{cor:xi}]
If the $m$-th condition holds where $m \in [n-1]$, we have
\begin{align*}
\Psi(\mu_1,\mu_2,\ldots,\mu_n) &= \sum_{i=1,n-m+1,\ldots,n} \left(\mu_i-  \frac{\mu_1 + \sum_{j = n-m+1}^n\mu_j}{m+1} \right)^2 \\
&= \sum_{i=1,n-m+1,\ldots,n} \left(\Delta_i - \frac{\sum_{j = n-m+1}^n\Delta_j}{m+1}\right)^2.
\end{align*}
For notational convenience, write $a = \frac{\sum_{j = n-m+1}^n\Delta_i}{m+1}$. Then
\begin{align*}
\Psi(\mu_1,\mu_2,\ldots,\mu_n) &= a^2 + \sum_{i = n-m+1}^n \left(\Delta_i - a\right)^2 \\
&= a^2 + m a^2  +  \sum_{i = n-m+1}^n\Delta_i^2 - 2a\sum_{i = n-m+1}^n\Delta_i\\
&= \sum_{i = n-m+1}^n\Delta_i^2 - (m+1)a^2 \\
&= \sum_{i = n-m+1}^n\Delta_i^2 - \frac{\left(\sum_{i = n-m+1}^n\Delta_i\right)^2}{m+1}.
\end{align*}
\end{proof}

Now we are ready to complete the proof of Lemma \ref{lem:xi lower bound}.
\begin{proof}[Proof of Lemma \ref{lem:xi lower bound}]
The upper bound follows directly from the formulas in Corollary \ref{cor:xi}.

Now we are going to prove the lower bound. If the $m$-th condition in Corollary~\ref{cor:xi} holds where $m\in[n-2]$, we have
\begin{align*}
\left(\sum_{i=n-m+1}^n\Delta_{i}^2\right) - \frac{\left(\sum_{i=n-m+1}^n\Delta_{i}\right)^2}{m+1}
&\geq \frac{\left(\sum_{i=n-m+1}^n\Delta_{i}\right)^2}{m} - \frac{\left(\sum_{i=n-m+1}^n\Delta_{i}\right)^2}{m+1} \\
&= \frac{\left(\sum_{i=n-m+1}^n\Delta_{i}\right)^2}{m(m+1)}\\
&\geq \frac{(m+1)\Delta_{n-m}^2}{m}
> \Delta_2^2,
\end{align*}
where the second-to-last inequality applied the $m$-th condition: $\Delta_{n-m} \leq \frac{\sum_{i=n-m+1}^n\Delta_{i}}{m+1}$.

On the other hand, if the last condition in Corollary~\ref{cor:xi} holds, applying the Cauchy–Schwarz inequality gives
\begin{align*}
\left(\sum_{i=2}^n\Delta_{i}^2\right) - \frac{\left(\sum_{i=2}^n\Delta_{i}\right)^2}{n}
&\geq \frac{\left(\sum_{i=2}^n\Delta_{i}\right)^2}{n-1} - \frac{\left(\sum_{i=2}^n\Delta_{i}\right)^2}{n} \\
&= \frac{\left(\sum_{i=2}^n\Delta_{i}\right)^2}{(n-1)n}\\
&\geq \frac{n-1}{n}\Delta_2^2,
\end{align*}
where the last inequality simply uses $\Delta_{i}\geq \Delta_2$ for any $i\in\{2,\ldots,n\}$.

Combining the two cases completes the inequality in Lemma \ref{lem:xi lower bound}. The equality holds if and only if $\Delta_2 = \cdots = \Delta_n$.
\end{proof}

\singlespacing
	\bibliographystyle{plainnat}
	\bibliography{references}

\end{document}